\documentclass{article}

\usepackage{PRIMEarxiv}

\usepackage[utf8]{inputenc} 
\usepackage[T1]{fontenc}    
\usepackage{hyperref}       
\usepackage{url}            
\usepackage{booktabs}       
\usepackage{amsfonts}       
\usepackage{nicefrac}       
\usepackage{microtype}      
\usepackage{lipsum}
\usepackage{fancyhdr}       
\usepackage{graphicx}       
\graphicspath{{media/}}     
\usepackage{latexsym}
\usepackage{amssymb}
\usepackage{amsmath}
\usepackage{amsthm}
\usepackage{algpseudocode}
\usepackage{algorithm}
\usepackage{booktabs}
\usepackage{bbm}
\usepackage{enumitem}
\usepackage{graphicx}
\usepackage{color}
\usepackage{pifont}
\newcommand{\cmark}{\ding{51}}%
\newcommand{\xmark}{\ding{55}}%



\newtheorem{theorem}{Theorem}

\newtheorem{corollary}[theorem]{Corollary}
\newtheorem{proposition}[theorem]{Proposition}

\newtheorem{definition}{Definition}
\newtheorem{remark}{Remark}

\pagestyle{fancy}
\thispagestyle{empty}
\rhead{ \textit{ }} 

\fancyhead[LO]{QUCE}

\title{QUCE: The Minimisation and Quantification of Path-Based Uncertainty for Generative Counterfactual Explanations
}

\author{
  Jamie Duell, Monika Seisenberger \\
  School of Mathematics and Computer Science \\
  Swansea University \\
  Swansea\\
  \texttt{\{jamie.duell, m.seisenberger\}@swansea.ac.uk} \\
   \And
  Hsuan Fu \\
  Department of Finance, Insurance and Real Estate \\
  Université Laval \\
  Québec City\\
  \texttt{hsuan.fu@fsa.ulaval.ca} \\
  \And 
  Xiuyi Fan\\
  Lee Kong Chian School of Medicine \\
  School of Computer Science and Engineering\\
  Nanyang Technological University \\
  Singapore\\
  \texttt{xyfan@ntu.edu.sg} \\
  }

\begin{document}
\maketitle

\begin{abstract}
Deep Neural Networks (DNNs) stand out as one of the most prominent approaches within the Machine Learning (ML) domain. The efficacy of DNNs has surged alongside recent increases in computational capacity, allowing these approaches to scale to significant complexities for addressing predictive challenges in big data. However, as the complexity of DNN models rises, interpretability diminishes. In response to this challenge, explainable models such as Adversarial Gradient Integration (AGI) leverage path-based gradients provided by DNNs to elucidate their decisions. Yet the performance of path-based explainers can be compromised when gradients exhibit irregularities during out-of-distribution path traversal. In this context, we introduce Quantified Uncertainty Counterfactual Explanations (QUCE), a method designed to mitigate out-of-distribution traversal by minimizing path uncertainty. QUCE not only quantifies uncertainty when presenting explanations but also generates more certain counterfactual examples. We showcase the performance of the QUCE method by comparing it with competing methods for both path-based explanations and generative counterfactual examples.
\end{abstract}

\section{Introduction}\label{sec1}

 Given the prevelance of big data and increased computability, the application of Deep Neural Network (DNN) methods are a commonality. However, the intricacies and depth of DNN architectures lead to results that lack inherent interpretability. In pivotal domains such as healthcare and finance, interpretability is crucial and thus the application of eXplainable Artificial Intelligence (XAI) to extract valuable insights from the DNN models is widespread \cite{10101766,xaifinreview}.

The Path-Integrated Gradients (Path-IG) \cite{IG} formulation presents axiomatic properties that are upheld solely by path-based explanation methods. The Out-of-Distribution (OoD) problem is prevalent in the application of path-based explanation methods \cite{10356434}; here the intuition is that traveling along a straight line path can incur irregular gradients and thus provide noisy attribution values \cite{GIG}. Another known limitation of many Integrated Gradient (IG) \cite{IG} based approaches is the selection of a baseline reference; thus the Adversarial Gradient Integration (AGI) \cite{AGI} method relaxes this constraint by generating baselines in adversarial classes. We note that AGI utilizes the path-based approach for generating counterfactual examples, and for this reason will be a primary baseline for our proposed method throughout this paper.

Counterfactual explanations \cite{Guidotti2022} are often presented in the form of counterfactual examples \cite{wachter,DiCE}; here the goal is to provide a counterfactual example belonging to an alternative class with respect to a reference example. Counterfactual approaches aim to answer the question: 
\begin{quote}
``Given an instance, what changes can be made to change the outcome for that instance?" 
\end{quote}
Naturally, this allows for empirical observation as to which changes could provide an alternative outcome. The argument for using counterfactual methods is often developed from a causal lens \cite{Hfler2005,Prosperi2020}. It follows that to better evaluate this causal relationship, a promising avenue is to unify feature attribution with counterfactual examples, as demonstrated by the Diverse Counterfactual Explanations (DiCE) \cite{DiCE} method. Naturally, given quantitative approaches to feature attribution calculation such as these, ideally feature attribution methods should adhere to desirable axioms across XAI literature \cite{IG,ijcai2022p90}. Thus, we aim to utilize state-of-the-art feature attribution assignment as to satisfy key axioms in our model development. Another concern with production of counterfactual examples is the production of realistic paths to successfully create a counterfactual example; therefore we shall be exploring uncertainty. 

Uncertainty quantification is not often considered when producing explanations, although some approaches have explored this. Examples include \cite{NEURIPS2021_slack} where post-hoc model-agnostic approaches such as Local Interpretable Model-Agnostic Explanations (LIME) \cite{Ribeiro2} and kernel SHapley Additive exPlanations (SHAP) \cite{Lundberg} are adapted into a Bayesian framework to model the uncertainty of the explanations produced. Since path-based methods are implementation invariant with respect to the model, the explanations will be consistent and thus there will be no variance in the explanations produced. In this way uncertainty quantification in the form of repeated runs of the XAI algorithm as elucidated in \cite{madaan2023uncertainty}, while applicable to post-hoc approximation XAI methods, will not suffice for implementation-invariant models. Autoencoder-based frameworks have also been used to measure uncertainty for both machine learning predictions and explanations \cite{antoran2021getting}, with integration of uncertainty seen in the production of counterfactual examples \cite{CLEAR,AECF}. Inherently, autoencoder approaches provide a more suitable basis for attribution settings by evaluating uncertainty in explanations with respect to the uncertainty inherent in the data. The standard autoencoder approach evaluates the reconstruction error, which is often utilized in work surrounding anomaly detection \cite{Torabi2023,angiulli2023reconstruction}; instead, we explore the use of a variational autoencoder (VAE) for variational inference, and thus investigate counterfactuals generated with respect to our approximation of the true data distribution. 

To address the above constraints, we propose the Quantified Uncertainty (Path-Based) Counterfactual Explanations (QUCE) method. The focus of the proposed method is three-fold. We aim to
\begin{itemize}
    \item minimize uncertainty and thus maximize the extent to which the generated paths and counterfactual examples are within distribution;
    \item relax the straight-line path constraints of Integrated Gradients;
    \item provide uncertainty quantification for counterfactual paths and counterfactual feature attribution.
\end{itemize}
In this work, we focus on the minimization of uncertain paths for counterfactual generation with quantifiable uncertainty measures on the generated counterfactual. QUCE's learning process relaxes IG's straight-line path restrictions as part of the generative process. 

Intuitively, it is unclear in many scenarios if one single best path toward an alternative outcome exists; for example a patient's treatment path may be unclear \cite{Beutler2016}, or there may be many viable paths to achieve the same outcome \cite{Bull2020}. Therefore, QUCE utilises both a single and multiple-paths approach, so we can observe a generalized explanation over all paths for an instance in obtaining a desired class and likewise inspect many example paths. From the multiple-paths approach, we are able to gauge a general approximation over many piecewise linear paths of the most important features in obtaining a desired counterfactual outcome,  each path independently aiming to minimize the uncertainty in its generative process and thus provide greater in-distribution interpolation. Similarly, we present the optimisation over the key metrics --  \emph{proximity} \cite{Guidotti2022}, \emph{validity} \cite{Guidotti2022} and \emph{uncertainty} \cite{9707594}. We provide an overview of some generative counterfactual methods and evaluate which account for the aforementioned metrics in table \ref{table:path_uncertainty_eval}. 

\begin{table}[t]
\centering
\caption{An overview of generative counterfactual methods and their consideration of key metrics. Here we observe of the three metrics QUCE is the method that accounts for all three.}
\begin{tabular*}{\columnwidth}{c @{\extracolsep{\fill}} ccc}
\toprule
\textbf{Properties} & Proximity & Validity & Uncertainty  \\
\midrule
QUCE & \cmark & \cmark & \cmark \\ 
DiCE & \cmark  & \cmark  & \xmark  \\ 
AGI & \cmark & \cmark & \xmark \\
\end{tabular*}
\label{table:path_uncertainty_eval}
\end{table}



\section{Explainable AI and Counterfactuals}
Counterfactual explanations can be presented both in the form of counterfactual examples \cite{DiCE,wachter} and also via counterfactual feature attribution \cite{DiCE}.  
\begin{definition}[Counterfactual Example]\label{counterfactual_example}
Given a probabilistic classifier $f : \mathbbm{R}^J \rightarrow \{0,1\}$, differentiable probabilistic function $F : \mathbbm{R}^J \rightarrow [0,1]$, and class $\tau = 0 \mbox{ or } 1$ an instance $\mathbf{x} = \langle x^1, \ldots, x^J \rangle \in X$ where $X \in \mathbbm{R}^{N \times J}$ and a classification threshold $\vartheta \in [0,1]$ such that 
\begin{align}
    f(\mathbf{x}) = \begin{cases}
\neg \tau, \mbox{if } F(\mathbf{x}) \leq \vartheta \\
\tau, \mbox{otherwise.}
\end{cases}
\end{align}
Then a counterfactual example of $\mathbf{x}$ is some $\mathbf{x}^c$ where $f(\mathbf{x}^c) \neq f(\mathbf{x})$.
\end{definition}
Counterfactual examples are often produced through the use of a learned generative function, examples of this can be seen in the work of \cite{CLEAR}. Extending this, we define a \emph{counterfactual generator}. 
\begin{definition}[Counterfactual Generator]
Given an instance $\mathbf{x} \in X$ and a classifier $f$, a counterfactual generator is a function $\mathcal{G} : \mathbbm{R}^J \rightarrow \mathbbm{R}^J$ that takes an instance $\mathbf{x}$ and returns a counterfactual example $\mathbf{x}^c \notin X$.
\end{definition}
Feature attribution is a common form of XAI, and the feature attribution approach is seen in many methods \cite{Lundberg,raghav,negflex,dejl2023cafe}. We continue by defining \emph{feature attribution}:
\begin{definition}[Feature Attribution]
Given an instance $\mathbf{x}$, a feature attribution method is a function $\Phi : \mathbbm{R}^J \rightarrow \mathbbm{R}^J$, where $\Phi$ takes an instance $\mathbf{x}$ and returns a vector of feature attribution values $\langle \phi^1, \ldots, \phi^J \rangle$.
\end{definition}
Concatenating prior feature attribution methods and counterfactual examples leads to \emph{counterfactual feature attribution} (CFA) which we define as follows: 
\begin{definition}[Counterfactual Feature Attribution]
 A counterfactual feature attribution method is a function $\Phi_{CF} : \mathbbm{R}^J \times \mathbbm{R}^J \rightarrow \mathbbm{R}^J$ that takes an instance $\mathbf{x}$ and a counterfactual example $\mathbf{x}^c$ and returns a vector of counterfactual feature attribution values $\langle \phi_{CF}^1, \ldots, \phi_{CF}^J \rangle$.    
\end{definition}

\section{Axioms for Path-Based Explainers}\label{sec:axioms}

 Here we informally introduce axioms used in XAI literature. In the seminal work of \cite{IG}, the authors introduce a set of axioms which play a foundational role for the development of path-based feature attribution methods. Informally, these encompass:
\begin{itemize}
    \item \textbf{Completeness}: The difference in prediction between the baseline and input should be equal to the sum of feature attribution values. 
    \item \textbf{Sensitivity(a)}: For every input and baseline that differ in one feature and for which the subsequent prediction is different, feature attribution should only be given to that one feature. 
    \item \textbf{Sensitivity(b)}: If the neural network is not mathematically dependent on one feature, the feature attribution assigned to that feature should be 0.
    \item \textbf{Implementation Invariance}: Two functionally equivalent neural networks should produce the same feature attribution as an explanation. 
\end{itemize}
The above axiomatic guarantees are for path-based explainers and for this reason we utilise the path-based formulation in our work. 

\section{Proposed Model: QUCE}
\subsection{Generating Counterfactuals with QUCE}
To generate counterfactuals we propose a three-part objective function with a composite  weighting vector $\Lambda = \langle \lambda_1, \lambda_2, \lambda_3 \rangle$, where each $\lambda \in \Lambda$ is an independent tolerance (weight) used to determine the influence of each part of the joint objective function presented in equation \ref{loss_function}. By minimizing the objective function we obtain the generated counterfactual $\mathbf{x}^c$. Informally, our objective function is composed of three parts:
\begin{itemize}
    \item $\mathcal{L}_{pr}$, for the maximization the probability towards the desired class;
    \item $\mathcal{L}_{\delta}$, to minimize the distance between the  instance and a generated counterfactual;
    \item $\mathcal{L}_{\epsilon}$, to minimize the uncertainty of both the generated paths and generated counterfactual examples.
\end{itemize}
Combining these terms with our weighting vector, we have
\begin{align}\label{loss_function}
     \mathcal{G}(\mathbf{x}) &= \underset{\mathbf{x}^c}{\arg \min} \  \lambda_1\mathcal{L}_{pr} + \lambda_2\mathcal{L}_{\delta} + \lambda_3\mathcal{L}_{\epsilon}.
\end{align}
Having constructed our generative objective function, we provide further notation to  illustrate the learning process. First, we consider an iterative learning process such as gradient descent on $\mathbf{x}$ to produce a path from $\mathbf{x}$ to a generated $\mathbf{x}^c$. Thus we aim to minimize the developed function $\mathcal{G}(\mathbf{x})$ through the gradient descent approach (and variants e.g. ADAM). We initially let $\mathbf{x}^c = \mathbf{x}$; $\mathbf{x}^c$ is updated via
\begin{align*}
     &\mathbf{x}^c \Leftarrow \mathbf{x}_{\Delta_i},\\
     &\Delta_i = \varphi \nabla_{\mathbf{x}^c}(\mathcal{G}(\mathbf{x})), \\ 
    &\mathbf{x}_{\Delta_i} = \mathbf{x} - \Delta_i.
\end{align*}
Let $\mathbf{x}$ be updated on a loop over $i (0 \leq i \leq n)$ iterations; when $i = n$ we let have our $\mathbf{x}^c$ indicating our generated point. Here $\varphi$ represents the ``learning rate," a small positive multiplier value $\varphi \in [0,1] : \varphi << 1$. We store each update on $\mathbf{x}^c$ as $\mathbf{x}^{\Delta}$ = $\langle \mathbf{x}_{\Delta_0}, \ldots, \mathbf{x}_{\Delta_n} \rangle$.

\subsection{Finding Counterfactuals}

\subsubsection{Valid Counterfactuals}
A key concept in finding counterfactual examples is ensuring that the counterfactual is indeed \emph{valid}, and thus we aim to produce counterfactual examples that belong to a counterfactual class.
\begin{remark}\label{remark2}
    For the sake of simplicity we let $f(\mathbf{x}) = \neg \tau$ throughout the remainder of this work; thus $\tau$ is the counterfactual class for which we aim to generate some $\mathbf{x}^c$ such that $f(\mathbf{x}^c) = \tau$. 
\end{remark}
In light of remark \ref{remark2}, given a target counterfactual class $\tau$ and probabilistic decision threshold $\vartheta$ we aim to find an instance $\mathbf{x}^c$ that satisfies  
\begin{align}\label{condition:probability}
F(\tau \vert \mathbf{x}^c) \geq \vartheta.
\end{align}
Here $\vartheta$ is a probability threshold for the target class $\tau$. Thus, we need a generator $\mathcal{G}$ that satisfies the condition in equation \ref{condition:probability}. To achieve this, we can maximize the likelihood of an instance belonging to a class $\tau$. The maximum log likelihood criterion is defined as:
\begin{align*}
    &\mathcal{L}_{pr} =  \bigg[  log[F( \tau \vert \mathbf{x}^c)]\bigg]
\end{align*}
such that, we only accept counterfactuals where $F(\cdot\vert\cdot) \geq \vartheta$. For optimisation we can rewrite this as a minimization problem, instead minimizing the negative log-likelihood:
\begin{align*}
    &\mathcal{L}_{pr} =  \bigg[  -log[F( \tau \vert \mathbf{x}^c)]\bigg]. 
\end{align*}
This constitutes the constrained optimisation problem with respect to some target class $\tau$. 
\subsubsection{Proximity for Counterfactuals}
Given an instance, in the production of counterfactual examples we often aim to find a counterfactual example that is ``similar" in feature space to the instance. This is often termed \emph{proximity}. 

In this work, we use the $l_2$ norm as the proposed model focuses on producing counterfactuals from continuous features, defining proximity as follows:

\begin{definition}[Proximity]
    Given an instance $\mathbf{x}$ and its counterfactual example $\mathbf{x}^c$, the proximity between the two instances is given by 
    \begin{align}
    \mathcal{L}_{\delta} = \bigg[\frac{1}{2} \vert\vert \mathbf{x}^c - \mathbf{x} \vert\vert^2 \bigg].
    \end{align}
\end{definition}

\subsubsection{Minimally Uncertain Counterfactuals}
To maximize the certainty of counterfactual examples, we examine their complement—namely, the uncertainty associated with a counterfactual example. To explore this, we establish the concept of \emph{counterfactual uncertainty}.
Informally, we consider uncertainty to be the Evidence Lower Bound (ELBO) as measured by a Variational Autoencoder (VAE) framework. The objective function ELBO is comprised of two components, namely the Kullbeck--Leibler (KL) divergence and a reconstruction loss. This is defined as the following: 
\begin{align}\label{VAEloss}
    &\operatorname{VAELoss}(\mathbf{x}) = \mathbbm{E}_{q_{\theta}}[log \ q_{\theta}(\mathbf{z} \vert \mathbf{x}) - log \ p(\mathbf{z})] - \mathbbm{E}_{q_{\theta}} \ [log \ p_{\psi}(\mathbf{x} \vert \mathbf{z})] \nonumber
\end{align}
where $p$ and $q$ are probability distributions and $\mathbf{z}$ is the latent representation of $\mathbf{x}$. The aim is to find a $\theta$ that successfully models the true training data distribution $\psi$, and thereby satisfying the following minimization problem:
\begin{align}
    \{\theta^*, \psi^* \} = \underset{\theta, \psi}{\arg \min}\operatorname{VAELoss}(\mathbf{x})
\end{align}
Intuitively, ELBO aims to produce a  $p$ and $q$ that are as close as possible while simultaneously optimizing the mapping of $\mathbf{z}$ back to its original representation $\mathbf{x}$, through learning to model the distributions with the parameters $\theta$ and $\psi$. Thus it suffices to say that if the reconstructing loss is minimized, the decoded $\mathbf{z}$ should successfully map back to $\mathbf{x}$. Posterior to the pre-trained VAE, we now have a fixed representation shaping our $p$ and $q$ distributions with our parameters $\theta^*$ and $\psi^*$ and thus we can provide counterfactual uncertainty as:
\begin{definition}[Counterfactual Uncertainty]\label{def:uncertainty}
Following equation \ref{VAEloss}, we simply take the loss without minimization to be the counterfactual uncertainty, thus given our fixed parameters $\theta^*$ and $\psi^*$, we have
\begin{align}
\mathcal{L}_{\epsilon} = 
     \mathbbm{E}_{q_{\theta^*}}[log \ {q_{\theta^*}}(\mathbf{z} \vert \mathbf{x}^c) - log \ p(\mathbf{z})] - \mathbbm{E}_{q_{\theta^*}} \ [log \ p_{\psi^{*}}(\mathbf{x}^c \vert \mathbf{z})]. \nonumber
\end{align}
\end{definition}

\subsection{Uncertainty in Counterfactual Explanations}
Definition \ref{def:uncertainty} allows for the evaluation of new generated instances with a measure of how ``good" the fit of the new instance is with respect to the training data distribution, and similarly how well a path fits into the data distribution.

From definition \ref{def:uncertainty} we have a quantifiable measure of uncertainty for the generated counterfactual $\mathbf{x}^c$. Expressing this in vector form as a difference, which is needed for modifying and updating our generated $\mathbf{x}^c$ to adjust for underlying uncertainty, we define \emph{Feature-wise Counterfactual Uncertainty} by simply looking at the reconstruction error, defined for simplicity as follows:
\begin{definition}[Feature-wise Counterfactual Uncertainty]
Given counterfactual uncertainty, we can rewrite a reconstruction error equivalent at a feature level by calculating a vector of differences $\epsilon_\mathbf{d}$ such that 
    \begin{align}
    \epsilon_\mathbf{d} = \langle \vert d^1 \vert, \ldots, \vert d^J \vert \rangle \ : \ \langle d^1, \ldots, d^J \rangle =  \mathbf{d}; \\ \mbox{where }\ \mathbf{d}  = \mathbf{x}^c - \hat{\mathbf{x}}^c.
    \end{align}
\end{definition}
With this representation, we can then successfully update $\mathbf{x}^c$ by both adding and subtracting this vector of feature-wise counterfactual uncertainty as given by the reconstruction error, and thus we can calculate \emph{Counterfactual Explanation Uncertainty}.  
\begin{definition}[Counterfactual Explanation Uncertainty]
Given a CFA $\Phi_{CF}$ and the feature-wise counterfactual uncertainty $\epsilon_\mathbf{d}$, the counterfactual explanation uncertainty is given by
\begin{align}
    \Phi_{CF}^{\epsilon_\mathbf{d}} =  \Phi_{CF}(\mathbf{x}^c \pm \epsilon_\mathbf{d}, \mathbf{x}).
\end{align}
\end{definition}

\subsection{Path Explanations}
\begin{figure*}[t]
    \centering  \includegraphics[scale=0.5]{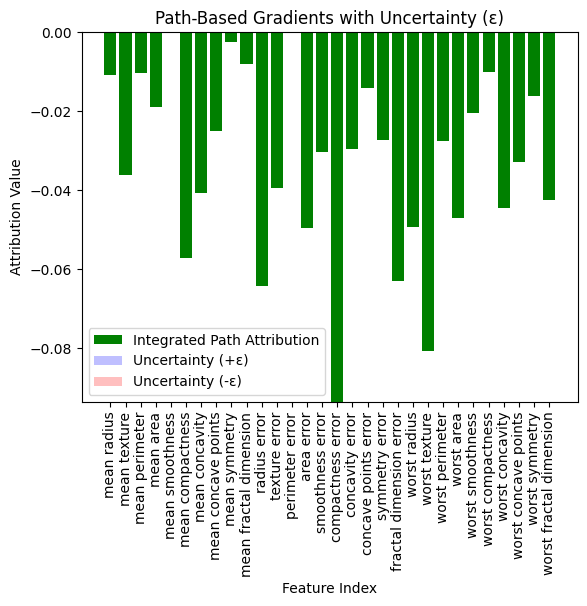}  \includegraphics[scale=0.5]{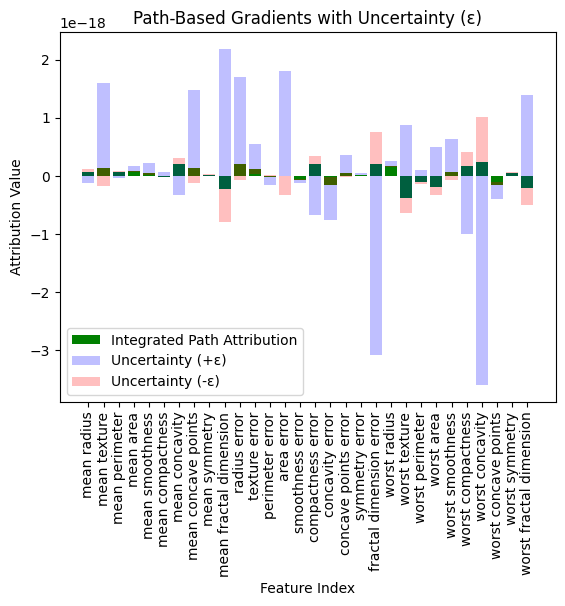}
    \caption{Here we illustrate two explanations produced on the Wisconsin Breast Cancer Dataset given by the proposed QUCE method. We observe how each feature influenced the change in the prediction as we attempted to generate a counterfactual example. This demonstrates how explanations are presented through the QUCE method, with the benefit of observing uncertainty in explanations presented. We see the left explanation has almost no uncertainty in generated explanation, whereas the right image demonstrates a large degree of uncertainty in the generated counterfactual explanation.}
    \label{fig:example_explanations}
\end{figure*}
The Path-Integrated Gradients \cite{IG,AGI,BatchIG,GIG,IDGI} formulation is the only approach to our knowledge within the landscape of feature attribution methods that satisfies all the feature attribution axioms in section \ref{sec:axioms}. Therefore we adopt the path integral formulation and relax the straight-line constraint seen in IG. To achieve this, recall the set of learned updates on $\mathbf{x}$, namely $\mathbf{x}^{\Delta}$. It follows that we can produce explanations over $\mathbf{x}^{\Delta}$ with respect to a classifier $f$. 

Formally, let the function $F$ be a continuously differentiable function, the QUCE explanation takes the path integral formulation such that given a smooth function $\psi = \langle \psi^1, \ldots, \psi^J \rangle : [0,1] \rightarrow \mathbb{R}^J$ defining a path in $\mathbb{R}^J$, where $\psi(\alpha)$ is a point along a path at $\alpha \in [0,1]$ with $\psi(0) = \mathbf{x}_{\Delta_0}$ and $\psi(1) = \mathbf{x}_{\Delta_{n}}$, the single-path QUCE explainer is defined as
\begin{align}
    \Phi_{\mbox{QUCE}}(\mathbf{x}^{\Delta}) := 
    \int^{\mathbf{x}_{\Delta_n}}_{\mathbf{x}_{\Delta_0}} \nabla F(\psi(\alpha)) \cdot \psi^\prime(\alpha) d \alpha.    
\end{align}
It follows that explanation uncertainty with respect to a single generated  counterfactual $\mathbf{x}^c$ is given as 
\begin{align}\label{eqn:QUCE_uncertainty}
    \Phi^{\pm\epsilon_\mathbf{d}}_{\mbox{QUCE}}(\mathbf{x}^c) :=  
    ((\mathbf{x}^c \pm \epsilon_\mathbf{d})  - \mathbf{x}^c)\times \bigg( \int^{\mathbf{x}^c\pm\epsilon_\mathbf{d}}_{\mathbf{x}^c} \nabla F(\psi(\alpha)) d \alpha\bigg).
\end{align}
We now show through the proof of proposition \ref{Riemann_prop_single} that the QUCE explanations are easily computed.

\begin{proposition}\label{Riemann_prop_single}
    The QUCE explainer has a computable Riemann approximation solution for each feature.
\end{proposition}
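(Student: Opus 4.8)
The plan is to exploit the fact that the path generated by the iterative update scheme is \emph{piecewise linear}: the stored iterates $\mathbf{x}^{\Delta} = \langle \mathbf{x}_{\Delta_0}, \ldots, \mathbf{x}_{\Delta_n} \rangle$ are the vertices of a polygonal curve, so the smooth path $\psi$ appearing in the definition of $\Phi_{\mbox{QUCE}}$ may be taken to be the concatenation of the $n$ affine segments $\psi_i(\alpha) = \mathbf{x}_{\Delta_i} + \alpha\,(\mathbf{x}_{\Delta_{i+1}} - \mathbf{x}_{\Delta_i})$ for $\alpha \in [0,1]$ and $0 \le i \le n-1$. First I would invoke additivity of the line integral over concatenated paths to write
\[
\Phi_{\mbox{QUCE}}(\mathbf{x}^{\Delta}) = \sum_{i=0}^{n-1} \int_0^1 \nabla F\big(\psi_i(\alpha)\big) \cdot \psi_i'(\alpha)\, d\alpha .
\]

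Next I would use that on each segment the tangent $\psi_i'(\alpha) = \mathbf{x}_{\Delta_{i+1}} - \mathbf{x}_{\Delta_i}$ is a constant vector, so the $j$-th component of the attribution splits as
\[
\phi^j_{\mbox{QUCE}} = \sum_{i=0}^{n-1} \big(x^j_{\Delta_{i+1}} - x^j_{\Delta_i}\big)\int_0^1 \frac{\partial F}{\partial x^j}\big(\psi_i(\alpha)\big)\, d\alpha ,
\]
which reduces the task to approximating, for each feature $j$ and each segment $i$, the scalar integral of $\alpha \mapsto \partial_{x^j} F(\psi_i(\alpha))$ over $[0,1]$. Because $F$ is assumed continuously differentiable and each $\psi_i$ is affine (hence continuous), this integrand is continuous on the compact interval $[0,1]$ and therefore Riemann integrable, so its Riemann sums over a uniform partition into $m$ subintervals converge to the integral. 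This yields the computable approximation
\[
\phi^j_{\mbox{QUCE}} \approx \sum_{i=0}^{n-1} \big(x^j_{\Delta_{i+1}} - x^j_{\Delta_i}\big)\,\frac{1}{m}\sum_{k=1}^{m}\frac{\partial F}{\partial x^j}\!\left(\mathbf{x}_{\Delta_i} + \frac{k}{m}\big(\mathbf{x}_{\Delta_{i+1}} - \mathbf{x}_{\Delta_i}\big)\right),
\]
a finite double sum of partial derivatives, each of which is supplied by the DNN via automatic differentiation; applying the same three steps to the single-segment path from $\mathbf{x}^c$ to $\mathbf{x}^c \pm \epsilon_{\mathbf{d}}$ handles the explanation-uncertainty quantity of equation~\ref{eqn:QUCE_uncertainty} analogously.

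The routine parts are the reparametrisation onto $[0,1]$ and the index bookkeeping for the polygonal path. The only point that needs care — and the main, though mild, obstacle — is justifying that the integrand on each piece is genuinely Riemann integrable so that the sums converge; this is exactly what the $C^1$ hypothesis on $F$ combined with the piecewise-linearity of the generated path delivers, and one may additionally remark that the per-segment approximation error is $O(1/m)$ by the standard error bound for the left (or midpoint) Riemann rule, since $\partial_{x^j} F \circ \psi_i$ is Lipschitz on $[0,1]$.
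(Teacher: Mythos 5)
Your proof is correct and takes essentially the same route as the paper's: decompose the generated path into its affine segments via additivity of the line integral, exploit the constant tangent on each segment to pull out the per-feature difference $(x^j_{\Delta_{i+1}} - x^j_{\Delta_i})$, and replace each segment integral by a uniform Riemann sum, arriving at the same final formula. Your added remarks on Riemann integrability of the continuous integrand and the $O(1/m)$ error bound go slightly beyond the paper's argument but do not constitute a different approach.
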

\begin{proof}
    Proof provided in the supplementary material.
\end{proof}
Both attributed values from equation \ref{eqn:QUCE_uncertainty} illustrate uncertainty in feature attribution values given by the QUCE explainer. In proposition \ref{prop:uncertainty} and its associated proof we show the implications of weighting uncertainty. 
\begin{definition}[$\lambda$-tolerance]
    An increase in $\lambda$-tolerance refers to the reduction of any component $\lambda$ of $\Lambda$. Likewise, a decrease in $\lambda$-tolerance refers to the increase of any component $\lambda$ of $\Lambda$.
\end{definition}
Naturally, as the weight for uncertainty decreases, we lean towards an increased tolerance of the effects of uncertainty in our explanations. The reasoning behind this is that we may sometimes accept a higher degree of uncertainty, depending on the purpose of generating counterfactuals and the volatility of the task.
\begin{proposition}\label{prop:uncertainty}
    Increasing the $\lambda$-tolerance of uncertainty
    provides a more flexible search space for possible paths to a generative counterfactual example. 
\end{proposition}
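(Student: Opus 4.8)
The plan is to recast the informal phrase \emph{``more flexible search space for possible paths''} as a concrete set inclusion and then verify that inclusion by an elementary monotonicity argument. Fix the first two weights $\lambda_1,\lambda_2$ and write $\Lambda = \langle \lambda_1,\lambda_2,\lambda_3\rangle$ and $\Lambda^\dagger = \langle \lambda_1,\lambda_2,\lambda_3^\dagger\rangle$ with $\lambda_3^\dagger < \lambda_3$; by the definition of $\lambda$-tolerance, passing from $\Lambda$ to $\Lambda^\dagger$ is exactly an increase in the $\lambda$-tolerance of uncertainty. I would define the \emph{admissible path set} at an operative loss budget $t$ as the family of piecewise-linear trajectories $\mathbf{x}^{\Delta} = \langle \mathbf{x}_{\Delta_0},\dots,\mathbf{x}_{\Delta_n}\rangle$ with $\mathbf{x}_{\Delta_0}=\mathbf{x}$ and $f(\mathbf{x}_{\Delta_n}) = \tau$ (i.e.\ $F(\tau \mid \mathbf{x}_{\Delta_n})\geq\vartheta$) every point of which lies in the sublevel set $\mathcal{S}_\Lambda(t) := \{\mathbf{z}\in\mathbb{R}^J : \lambda_1\mathcal{L}_{pr}(\mathbf{z}) + \lambda_2\mathcal{L}_{\delta}(\mathbf{z}) + \lambda_3\mathcal{L}_{\epsilon}(\mathbf{z}) \leq t\}$. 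This is the natural formalisation of the region the descent dynamics on $\mathcal{G}$ is willing to traverse while generating a counterfactual, so ``search space for possible paths'' is $\mathcal{S}_\Lambda(t)$ together with the paths it contains.

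The core step is then the inclusion $\mathcal{S}_\Lambda(t)\subseteq \mathcal{S}_{\Lambda^\dagger}(t)$. Recalling Definition~\ref{def:uncertainty}, $\mathcal{L}_{\epsilon}$ is a negative-ELBO functional; after the harmless normalisation that shifts the reconstruction term so that $\mathcal{L}_{\epsilon}\geq 0$ pointwise (adding a constant to $\mathcal{G}$ leaves its $\arg\min$, and hence the generator, unchanged), we have $\lambda_3^\dagger\mathcal{L}_{\epsilon}(\mathbf{z}) \leq \lambda_3\mathcal{L}_{\epsilon}(\mathbf{z})$ for every $\mathbf{z}$. Adding the common terms $\lambda_1\mathcal{L}_{pr}(\mathbf{z}) + \lambda_2\mathcal{L}_{\delta}(\mathbf{z})$ and comparing with $t$ gives the inclusion, so every admissible path under $\Lambda$ is still admissible under $\Lambda^\dagger$: lowering the uncertainty weight can only enlarge the search space. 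To see the enlargement is genuine, I would exhibit a point $\mathbf{z}$ — better, a whole trajectory reaching a valid $\mathbf{x}^c$ — with $\mathcal{L}_{\epsilon}(\mathbf{z})>0$ that sits just outside $\mathcal{S}_\Lambda(t)$ but inside $\mathcal{S}_{\Lambda^\dagger}(t)$; such a witness exists whenever $\mathcal{L}_{\epsilon}$ is non-constant on the relevant region, which holds for any nondegenerate pretrained VAE. This yields the strict, hence ``more flexible,'' search space claimed.

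The main obstacle is not the monotonicity itself but pinning down the right object to be monotone: the proposition is stated informally, so the proof stands or falls on committing to the sublevel-set formalisation above (and on checking that the normalisation making $\mathcal{L}_{\epsilon}$ nonnegative is legitimate, since the VAE objective combines a nonnegative KL term with a reconstruction term that need not be sign-definite for continuous features). A robust alternative that sidesteps the sign question is the constrained-optimisation view: under mild regularity, the Lagrangian/KKT correspondence makes minimising $\lambda_1\mathcal{L}_{pr}+\lambda_2\mathcal{L}_{\delta}+\lambda_3\mathcal{L}_{\epsilon}$ equivalent to minimising $\lambda_1\mathcal{L}_{pr}+\lambda_2\mathcal{L}_{\delta}$ subject to $\mathcal{L}_{\epsilon}\leq c(\lambda_3)$ with $c$ non-increasing in $\lambda_3$, whence decreasing $\lambda_3$ directly enlarges the feasible region $\{\mathbf{z} : \mathcal{L}_{\epsilon}(\mathbf{z})\leq c(\lambda_3)\}$ through which paths may be routed. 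Once the formalisation is fixed, the remaining work — the sublevel-set inclusion and the separating witness — is routine.
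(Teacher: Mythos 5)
Your proposal is correct in substance but takes a genuinely different route from the paper. The paper's proof works only at the extreme: it sets $\lambda_1 = 0$, takes the limit $\lambda_3 \rightarrow 0^+$, observes that the objective collapses to $\frac{\lambda_2}{2}\|\mathbf{x}^c - \mathbf{x}\|^2$, and concludes from the positive-definite Hessian (eigenvalues $\lambda_2 > 0$) that the residual search problem is convex and governed by proximity alone once uncertainty is fully relaxed. You instead formalise ``more flexible'' as a sublevel-set inclusion $\mathcal{S}_{\Lambda}(t) \subseteq \mathcal{S}_{\Lambda^{\dagger}}(t)$ and prove monotonicity for \emph{any} decrease of $\lambda_3$, not just the limiting case, and you add a strictness witness plus a Lagrangian/KKT reformulation. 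What your approach buys is a precise statement of the claim actually being asserted (the paper never defines ``search space'') and coverage of intermediate tolerances; what the paper's approach buys is brevity and a concrete structural conclusion (convexity, uniqueness of the minimiser) in the fully relaxed regime, at the cost of saying nothing about how the space grows between $\lambda_3 = 1$ and $\lambda_3 = 0$. One caution on your argument: the shift making $\mathcal{L}_{\epsilon}$ nonnegative is only ``harmless'' for the $\arg\min$ at fixed $\Lambda$; it perturbs the objective by the $\lambda_3$-dependent constant $\lambda_3 c$, so your sublevel sets are those of the normalised functional, and the claimed inclusion is a statement about that normalised family. Since you apply the normalisation consistently on both sides of the comparison the inclusion still holds, and a uniform lower bound on the reconstruction term exists for a fixed-variance Gaussian decoder, so the step is legitimate --- but it is worth stating explicitly that the conclusion is relative to this choice of normalisation. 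Your constrained-optimisation alternative sidesteps this entirely and is arguably the cleanest reading of the proposition.
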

\begin{proof}
This is trivial subject to $\lambda_{3}$ approaching zero. Proof provided in the supplementary material.
\end{proof}
Due to the stochastic nature of our model under nondeterministic variants of gradient descent (e.g. optimization with stochastic gradient descent \cite{4308316}), and with potentially multiple minima (e.g. we may have two points equally ``close" to the decision bound with different values),  we consider a set of generated counterfactual examples to be given as $C = \langle \mathbf{x}^c_{1, 1}, \ldots, \mathbf{x}^c_{1, k} \rangle$, where $k$ is the number of generated counterfactual examples over some  set $\mathbf{x}$. Given $C$, we can accumulate attribution over many counterfactuals by avoiding the specification of $\mathbf{x}^c$; our lower limit is implicitly assumed to be our instance to explain $\mathbf{x}$, so that we have 
\begin{align}
        \Phi_{\mbox{exQUCE}}(\mathbf{x}) :=  \int_{\mathbf{x}^c} \bigg(\Phi_{\mbox{QUCE}}(\mathbf{x}^{\Delta}) \bigg) p_C(\mathbf{x}^c) d\mathbf{x}^c
\end{align}
where we integrate over $p_C$ the  distribution of $C$ for all $\mathbf{x}^c \in C$ and we can instead rewrite the integral as an expectation as follows:
\begin{align}
    \Phi_{\mbox{exQUCE}}(\mathbf{x}) :=  \underset{{\mathbf{x}^c \sim C}}{\mathbbm{E}} \bigg[\Phi_{\mbox{QUCE}}(\mathbf{x}^{\Delta})  \bigg].
\end{align}
Here we let $\alpha \sim \mathcal{U}(0,1)$ indicate interpolation over $\alpha$ for $m$ counterfactual steps in the generator function. Informally, we get the expectation of the gradients over the piecewise linear path between counterfactual steps of the generator. We take a similar approach to the Expected Gradients \cite{Erion2021} formulation, except that instead of sampling from a background set of baselines, we sample from a set of generative counterfactual examples.
We make two arguments as to why we use this approach: 
\begin{itemize}
    \item In explaining a counterfactual outcome, we do not know the specific path taken and thus we can average over many paths. 
    \item We can invert the path to explain $\mathbf{x}$ and therefore we can have many generative baselines. This relaxes the specified baseline of many existing path-based explanation methods.
\end{itemize}
To further extend on the axiomatic guarantees of path-based explainers, we show via proposition \ref{proposition_expectedatt} that completeness holds when working with the many-paths approach for expected values. 
\begin{proposition}\label{proposition_expectedatt}
Given the function $\Phi_{\mbox{exQUCE}}(\mathbf{x})$, the expected difference in prediction probabilities between generated counterfactuals in the set $C$ with respect to the prediction probability given by $F(\mathbf{x})$, the following equality holds:
\begin{align}\label{p1}
&\underset{{\mathbf{x}^c \sim C}}{\mathbbm{E}} \bigg[\Phi_{\mbox{QUCE}}(\mathbf{x}^{\Delta})  \bigg] \\ \label{p2}&= \underset{{\mathbf{x}^c \sim C}}{\mathbbm{E}} \bigg[F(\mathbf{x}^c) - F(\mathbf{x}) \bigg]    
\end{align}
\end{proposition}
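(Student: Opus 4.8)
The plan is to reduce the statement to the gradient theorem (the fundamental theorem of calculus for line integrals) applied pathwise, followed by linearity of expectation. First I would fix an arbitrary generated counterfactual $\mathbf{x}^c \in C$ together with its associated sequence of generator iterates $\mathbf{x}^{\Delta} = \langle \mathbf{x}_{\Delta_0}, \ldots, \mathbf{x}_{\Delta_n} \rangle$, and recall from the generative procedure that $\mathbf{x}_{\Delta_0} = \mathbf{x}$ and $\mathbf{x}_{\Delta_n} = \mathbf{x}^c$. Hence the path $\psi : [0,1] \to \mathbb{R}^J$ underlying the single-path explainer $\Phi_{\mbox{QUCE}}(\mathbf{x}^{\Delta})$ satisfies $\psi(0) = \mathbf{x}$ and $\psi(1) = \mathbf{x}^c$, regardless of the (relaxed, non-straight-line) shape of $\psi$ in between.

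Second I would apply the gradient theorem to this single path. Since $F$ is assumed continuously differentiable and $\psi$ is piecewise $C^1$ (concretely, the piecewise linear interpolation through the stored iterates $\mathbf{x}_{\Delta_i}$), the chain rule gives $\tfrac{d}{d\alpha} F(\psi(\alpha)) = \nabla F(\psi(\alpha)) \cdot \psi^\prime(\alpha)$ for all but finitely many $\alpha$, so that
\begin{align}
\Phi_{\mbox{QUCE}}(\mathbf{x}^{\Delta}) = \int_{\mathbf{x}_{\Delta_0}}^{\mathbf{x}_{\Delta_n}} \nabla F(\psi(\alpha)) \cdot \psi^\prime(\alpha)\, d\alpha = F(\psi(1)) - F(\psi(0)) = F(\mathbf{x}^c) - F(\mathbf{x}).
\end{align}
For the piecewise linear path this is made precise by splitting $[0,1]$ at the breakpoints, applying the fundamental theorem of calculus on each subinterval, and telescoping, so that the interior endpoint contributions cancel. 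This is exactly the completeness property from Section~\ref{sec:axioms}, now established for the relaxed path rather than the straight-line path, and it depends only on the fixed endpoints.

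Third, since the identity $\Phi_{\mbox{QUCE}}(\mathbf{x}^{\Delta}) = F(\mathbf{x}^c) - F(\mathbf{x})$ holds for every realisation $\mathbf{x}^c \in C$, I would take the expectation of both sides with respect to $\mathbf{x}^c \sim p_C$ and use linearity of expectation to conclude
\begin{align}
\underset{\mathbf{x}^c \sim C}{\mathbbm{E}} \bigl[\Phi_{\mbox{QUCE}}(\mathbf{x}^{\Delta})\bigr] = \underset{\mathbf{x}^c \sim C}{\mathbbm{E}} \bigl[F(\mathbf{x}^c) - F(\mathbf{x})\bigr],
\end{align}
which is precisely the equality between \eqref{p1} and \eqref{p2}.

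I expect the only genuine subtlety to lie in the regularity bookkeeping of the second step: the generator path $\mathbf{x}^{\Delta}$ is only piecewise smooth, so the gradient theorem must be invoked segment by segment rather than in one stroke, and one should note that the computable Riemann approximation of Proposition~\ref{Riemann_prop_single} converges to this same value, so the discrete implementation inherits the identity up to discretisation error. The endpoint identification and the final expectation step are then immediate.
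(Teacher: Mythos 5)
Your proposal is correct and follows essentially the same route as the paper: both arguments reduce the claim to the completeness identity $\Phi_{\mbox{QUCE}}(\mathbf{x}^{\Delta}) = F(\mathbf{x}^c) - F(\mathbf{x})$ for each realised path and then apply linearity of expectation over $\mathbf{x}^c \sim C$. The only difference is that the paper simply cites the completeness axiom, whereas you additionally justify it for the relaxed piecewise-linear path by applying the gradient theorem segment by segment and telescoping, which is a welcome extra degree of rigour but not a different argument.
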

\begin{proof}
This is a direct consequence of the \emph{completeness} axiom; the proof is provided in the supplementary material.
\end{proof}
It follows that the expected value approach is equally computable and is a direct extension of proposition \ref{prop:uncertainty}. We illustrate this in corollary \ref{expectedquce_comp}'s simple proof.  
\begin{corollary}\label{expectedquce_comp}
    The expected QUCE variant has a computable Riemann approximation solution for each feature.
\end{corollary}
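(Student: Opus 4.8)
The plan is to reduce the claim to Proposition~\ref{Riemann_prop_single} together with the observation that the set $C$ of generated counterfactuals is finite, so that the defining expectation $\Phi_{\mbox{exQUCE}}(\mathbf{x}) = \mathbbm{E}_{\mathbf{x}^c \sim C}[\Phi_{\mbox{QUCE}}(\mathbf{x}^{\Delta})]$ collapses to a finite convex combination $\sum_{j=1}^{k} p_C(\mathbf{x}^c_j)\,\Phi_{\mbox{QUCE}}(\mathbf{x}^{\Delta_j})$, with uniform weights $1/k$ in the unweighted case. Each summand is, by Proposition~\ref{Riemann_prop_single}, a path integral along the piecewise-linear trajectory $\mathbf{x}^{\Delta_j} = \langle \mathbf{x}_{\Delta_0}, \ldots, \mathbf{x}_{\Delta_n} \rangle$ associated with $\mathbf{x}^c_j$, and each such integral already admits a computable per-feature Riemann approximation, so the whole object should inherit one by linearity.

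First I would record the per-feature Riemann sum supplied by Proposition~\ref{Riemann_prop_single}: partition $[0,1]$ (equivalently, use the stored update indices $0,\ldots,n$, further refined by the $m$ interpolation points drawn as $\alpha \sim \mathcal{U}(0,1)$) and, for coordinate $\ell$, approximate $\int \nabla F(\psi(\alpha)) \cdot \psi'(\alpha)\, d\alpha$ by $\sum_t (\partial_\ell F)(\psi(\alpha_t))\,(\psi^\ell(\alpha_{t+1}) - \psi^\ell(\alpha_t))$. This is a finite computation, since $F$ is continuously differentiable (hence the integrand is continuous and the sums converge as the mesh refines) and the path is given explicitly by $\mathbf{x}^{\Delta}$.

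Next I would invoke linearity of the Riemann sum coordinate-wise: for each feature $\ell$, the approximation of $\Phi_{\mbox{exQUCE}}$ is simply $\sum_{j=1}^{k} p_C(\mathbf{x}^c_j)$ times the Riemann sum of the $j$-th path, which is again a finite sum of finitely many gradient evaluations. For the error bound, if the single-path approximation of path $j$ has per-feature error at most $\varepsilon_j$ under a mesh of size $h$, then — since the weights $p_C(\mathbf{x}^c_j)$ are nonnegative and sum to one — the aggregated error is at most $\max_j \varepsilon_j$, which tends to $0$ as $h \to 0$; hence the expected QUCE explanation has a convergent, computable Riemann approximation for each feature.

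The only real subtlety, and where I would spend a sentence of care, is that the $k$ counterfactual paths need not share a common partition: they may have different lengths $n$ and different step sizes. This is harmless, because each path integral is discretised on its own mesh before the finite averaging, so no common refinement is needed; continuity of $\nabla F$ on each compact path image gives convergence path-by-path, and finiteness of $C$ then yields convergence of the average. Equivalently, one may phrase the entire quantity as a single finite sum over the product index set consisting of a choice of $\mathbf{x}^c_j \in C$ together with an $\alpha$-interpolation point, which is manifestly computable; I expect this is the cleanest way to present the argument in the supplementary material.
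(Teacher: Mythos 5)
Your proposal is correct and follows essentially the same route as the paper: the supplementary proof likewise observes that the expectation over the finite set $C$ reduces to the uniform average $\frac{1}{k}\sum_{\mathbf{x}^c \in C}\Phi_{\mbox{QUCE}^{\mathcal{R}}}(x^{\Delta,j})$ of the per-path Riemann approximations guaranteed by Proposition~\ref{Riemann_prop_single}. Your additional remarks on error bounds and non-shared partitions are more careful than what the paper records, but they do not change the argument.
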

\begin{proof}
    This follows from proposition \ref{Riemann_prop_single}; the proof is provided in the supplementary material.
\end{proof}
Going further, we demonstrate monotonic relationships for generated counterfactual instances that are given by the multiple paths approach. This is a further consequence of the completeness axiom and is expressed in corollary \ref{expected_mono}.  
\begin{corollary}\label{expected_mono} Given two sets of counterfactual examples $C^1$ and $C^2$ for an instance $\mathbf{x}$,
\begin{align*}
&\mbox{if} \ \underset{{\mathbf{x}^c \sim C^1}}{\mathbbm{E}}\bigg[ F(\mathbf{x}^c) - F(\mathbf{x}) \bigg] \leq \underset{{\mathbf{x}^c \sim C^2}}{\mathbbm{E}}\bigg[ F(\mathbf{x}^c) - F(\mathbf{x}) \bigg], \\ &\mbox{then} \
\underset{{\mathbf{x}^c \sim C^1}}{\mathbbm{E}}\bigg[ \Phi_{\mbox{exQUCE}}(\mathbf{x}) \bigg] \leq \underset{{\mathbf{x}^c \sim C^2}}{\mathbbm{E}}\bigg[ \Phi_{\mbox{exQUCE}}(\mathbf{x}) \bigg].
\end{align*}
\end{corollary}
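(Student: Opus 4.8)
The plan is to reduce the statement to Proposition~\ref{proposition_expectedatt}, which already identifies the (total) attribution produced by the many-paths explainer with the expected change in prediction probability over its sampling set. First I would pin down notation, since the displayed hypothesis and conclusion are written compactly: for a sampling set $C$, write $\Phi^{C}_{\mbox{exQUCE}}(\mathbf{x})$ for the explanation $\underset{\mathbf{x}^c \sim C}{\mathbbm{E}}[\Phi_{\mbox{QUCE}}(\mathbf{x}^{\Delta})]$ obtained by drawing the generative baselines from $C$, and read the two quantities $\underset{\mathbf{x}^c \sim C^1}{\mathbbm{E}}[\Phi_{\mbox{exQUCE}}(\mathbf{x})]$ and $\underset{\mathbf{x}^c \sim C^2}{\mathbbm{E}}[\Phi_{\mbox{exQUCE}}(\mathbf{x})]$ in the statement as $\Phi^{C^1}_{\mbox{exQUCE}}(\mathbf{x})$ and $\Phi^{C^2}_{\mbox{exQUCE}}(\mathbf{x})$ respectively; the inequality is understood, as in the completeness axiom, on the summed attribution $\sum_{j}\Phi^{j}$.

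Next I would apply Proposition~\ref{proposition_expectedatt} twice, once with sampling set $C^1$ and once with $C^2$, to obtain
\begin{align*}
\Phi^{C^1}_{\mbox{exQUCE}}(\mathbf{x}) &= \underset{\mathbf{x}^c \sim C^1}{\mathbbm{E}}\big[ F(\mathbf{x}^c) - F(\mathbf{x}) \big], \\
\Phi^{C^2}_{\mbox{exQUCE}}(\mathbf{x}) &= \underset{\mathbf{x}^c \sim C^2}{\mathbbm{E}}\big[ F(\mathbf{x}^c) - F(\mathbf{x}) \big].
\end{align*}
Substituting the hypothesis $\underset{\mathbf{x}^c \sim C^1}{\mathbbm{E}}[ F(\mathbf{x}^c) - F(\mathbf{x}) ] \leq \underset{\mathbf{x}^c \sim C^2}{\mathbbm{E}}[ F(\mathbf{x}^c) - F(\mathbf{x}) ]$ into the left-hand sides of these two identities immediately gives $\Phi^{C^1}_{\mbox{exQUCE}}(\mathbf{x}) \leq \Phi^{C^2}_{\mbox{exQUCE}}(\mathbf{x})$, which is the claim. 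I would also observe that, because Proposition~\ref{proposition_expectedatt} rests only on the completeness axiom and linearity of expectation, no further regularity on $F$, on the generator $\mathcal{G}$, or on the distributions $p_{C^1}, p_{C^2}$ is required beyond what is already assumed.

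The step needing the most care is bookkeeping rather than analysis: making explicit that the outer $\underset{\mathbf{x}^c \sim C^i}{\mathbbm{E}}[\cdot]$ in the statement is precisely the averaging that already defines $\Phi_{\mbox{exQUCE}}$ relative to $C^i$ (so the two ``expectations'' are the same operation, not a genuine nesting, which would trivialise the statement), and that the inequality is the scalar comparison of total attributions rather than a componentwise one — componentwise monotonicity would not follow from completeness alone. Once these conventions are fixed, the corollary is just a rewriting of its hypothesis through Proposition~\ref{proposition_expectedatt}, which is why it is stated as a corollary.
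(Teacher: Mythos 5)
Your proof is correct and follows the same route as the paper, which simply cites Proposition~\ref{proposition_expectedatt} and the completeness axiom; you apply that proposition once for each of $C^1$ and $C^2$ and substitute the hypothesis, which is exactly the intended argument. Your added care about the notational nesting of the expectations and about the inequality being on total (not componentwise) attribution is a sensible clarification but does not change the substance.
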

\begin{proof}
    This is a direct consequence of proposition \ref{proposition_expectedatt} and the \emph{completeness} axiom. 
\end{proof}
Given we can compute many-paths explanations, it follows that we can also take the expected gradients for the explanation uncertainty computed by QUCE along each path, such that 
\begin{align}
\Phi^{\pm \epsilon_\mathbf{d}}_{\mbox{exQUCE}}(\mathbf{x}^c) := \underset{\mathbf{x}^c \sim C}{\mathbbm{E}} \bigg[ \Phi^{\pm\epsilon_\mathbf{d}}_{\mbox{QUCE}}(\mathbf{x}^c) \bigg].
\end{align}

\section{Experimental Setup}
\begin{table*}[t]
\centering
\caption{Comparison of the average path uncertainty on the generated counterfactual instances. This is experimented over 100 instances from the training and testing sets of each dataset. Here we have 1000 steps (path interpolation instances) for the Riemann approximation of every path-based approach, thus effectively 100$\times$1000 instances. Here the lower value the better. The proposed QUCE method shows superior performance on average when comparing counterfactual path-based approaches.}
\begin{tabular*}{\textwidth}{c @{\extracolsep{\fill}} ccccccccc}
\toprule
\textbf{Path $\mathcal{L}_{\epsilon}$} & Lung & Breast & Skin & Lymph & Rectal & COVID & W-BC \\
\midrule
\textbf{Train} \\
\midrule
QUCE & \textbf{0.92$\pm$0.32} & \textbf{0.82$\pm$0.26} & \textbf{0.94$\pm$0.41} & \textbf{0.74$\pm$0.06} & \textbf{0.86$\pm$0.28} & \textbf{1.36$\pm$0.15} & \textbf{0.82$\pm$0.16}  \\ 
IG-QUCE & 0.95$\pm$0.32  & 0.84$\pm$0.27  & 0.96$\pm$0.41 & 0.76$\pm$0.06 & 0.86$\pm$0.29 & 1.39$\pm$0.11 & 0.84$\pm$0.20 \\ 
AGI & 1.94$\pm$1.86 & 1.49$\pm$0.95 & 1.80$\pm$1.47  & 0.92$\pm$0.28 & 2.19$\pm$2.23 & 2.01$\pm$0.15 & 0.93$\pm$0.36 \\ 
\midrule
\textbf{Test} \\
\midrule 
QUCE & \textbf{0.82$\pm$0.34} & \textbf{0.91$\pm$0.28}  & \textbf{0.82$\pm$0.31} & \textbf{0.69$\pm$0.19} & 0.80$\pm$0.29 & \textbf{0.61$\pm$0.07} & \textbf{0.83$\pm$0.29}  \\ 
IG-QUCE & 0.83$\pm$0.33 & \textbf{0.91$\pm$0.28} & \textbf{0.82$\pm$0.31} & 0.70$\pm$0.19& \textbf{0.79$\pm$0.29} & 0.67$\pm$0.05 & 0.85$\pm$0.33 \\ 
AGI & 1.22$\pm$1.16 &1.83$\pm$0.97 & 1.34$\pm$1.20 & 0.89$\pm$0.28 & 1.57$\pm$1.35 
 & 0.82$\pm$0.11 & 0.96$\pm$0.55 \\  
\end{tabular*}
\label{table:path_uncertainty_eval}
\end{table*}
\begin{table*}[t]
\centering
\caption{Comparison of the average VAE loss for generated counterfactual examples. This is experimented over 100 instances on each dataset. Here we observe that the proposed QUCE method performs best across all datasets.}
\begin{tabular*}{\textwidth}{c @{\extracolsep{\fill}} ccccccc}
\toprule
\textbf{Counterfactual $\mathcal{L}_{\epsilon}$} & Lung & Breast & Skin & Lymph & Rectal & COVID & W-BC \\
\midrule
\textbf{Train} \\ 
\midrule 
QUCE & \textbf{1.01} & \textbf{0.78}  & \textbf{0.97} & \textbf{0.71}  & \textbf{0.85} & \textbf{1.25} & \textbf{0.93} \\ 
DiCE &  1.97 & 1.02 & 1.48  & 1.10 & 1.27 & 1.57 & 3.63  \\ 
AGI & 2.93 & 2.07  & 2.61 & 1.01 & 3.48 & 2.40 & 1.22 \\
\midrule 
\textbf{Test} \\
\midrule 
QUCE & \textbf{0.93}  & \textbf{0.91}  & \textbf{0.86} & \textbf{0.67} & \textbf{0.83} & \textbf{0.76}  & \textbf{1.08}\\ 
DiCE & 1.95  &  1.09 & 1.33 & 1.18 & 1.22 & 0.92  & 3.00 \\ 
AGI & 1.68 & 2.75 & 1.87 & 1.02 & 2.38 & 0.84 & 1.41  \\
\end{tabular*}
\label{table:cf_uncertainty_tbl}
\end{table*}
\begin{table*}[t]
\centering
\caption{Comparison of the average sum of feature-wise reconstruction error between original instances and their generated counterfactual examples. This is experimented on 100 instances for each dataset. Here we observe that the QUCE method performs best in generating counterfactuals with minimal uncertainty across all datasets.}
\begin{tabular*}{\textwidth}{c @{\extracolsep{\fill}} ccccccc}
\toprule
\textbf{Counterfactual Reconstruction} & Lung & Breast & Skin & Lymph & Rectal & COVID & W-BC \\
\midrule
\textbf{Train} \\ 
\midrule 
QUCE & \textbf{0.95} & \textbf{0.73}  & \textbf{0.90} & \textbf{0.66}  & \textbf{0.78} & \textbf{1.16} & \textbf{0.73} \\ 
DiCE & 1.80  & 0.95  & 1.33  & 1.01 & 1.18 & 1.38 & 1.04 \\ 
AGI & 2.55  & 1.89  & 2.25 & 0.91 & 2.91 & 2.19 & 0.80 \\
\midrule 
\textbf{Test} \\
\midrule 
QUCE & \textbf{0.88}  & \textbf{0.85}  & \textbf{0.80} & \textbf{0.63} & \textbf{0.78} & \textbf{0.57}  & \textbf{0.76}\\ 
DiCE & 1.81 & 0.95  & 1.32 & 0.99 & 1.18 & 1.38 & 1.04 \\ 
AGI & 1.53 & 2.41 & 1.66 & 0.92 & 2.08 &  0.79 & 0.81 \\
\end{tabular*}
\label{table:cf_reconstruction_tbl}
\end{table*}

\subsection{Datasets}
\subsubsection{The Simulacrum} The Simulacrum\footnote{https://simulacrum.healthdatainsight.org.uk/} is a synthetic dataset used in this study, the Simulacrum is a large dataset developed by Health Data Insight CiC and derived from anonymous cancer data provided by the National Disease Registration Service, NHS England. We produce five subsets of patient records based on ICD-10 codes corresponding to lung cancer, breast cancer, skin cancer, lymphoma and rectal cancer. These datasets are organised as survival time classification problems, where patients are predicted a survival time of either at least 6 months or less than 6 months.  

\subsubsection{COVID Rate of Infection}
The COVID rate of infection dataset contains details on control measures, temperature, humidity and the daily rate of infection for different regions of the UK. Details on data collection are provided in \cite{exmed}. This dataset is a binary classification task identifying an increased rate of infection against a non-increased rate of infection. 

\subsubsection{Wisconsin Breast Cancer}
The Wisconsin Breast Cancer (W-BC) \cite{misc_breast_cancer_wisconsin_(diagnostic)_17} dataset, provided in the scikit-learn library\footnote{\url{https://scikit-learn.org/stable/modules/generated/sklearn.datasets.load_breast_cancer.html}}, is a binary classification dataset that classifies malginant and benign tumours given a set of independent features from breast mass measurements. 

\subsection{Baseline Methods}
For comparison, we consider a selection of methods that aim to generate counterfactual examples and also a collection of path-based explainers. 

\subsubsection{Diverse Counterfactual Explanations}
DiCE, a counterfactual generator, provides feature attribution values for an instance with respect to its counterfactual examples. We use the DiCE method as a comparison for generating counterfactual examples, as DiCE is not a path-based explainer, we can only compare the generated counterfactuals. 

\subsubsection{Integrated Gradients}
The IG feature attribution method produces explanations for instances in a given dataset. To achieve this, the approach is to integrate over the gradients of a straight-line path from an all-zero baseline vector to the instance to be explained. We modify this in our experiments so that our baseline becomes the instance to be explained, while the target instance is the counterfactual generated by QUCE, so we can evaluate the straight-line path solution against the QUCE-generated path.  

\subsubsection{Adversarial Gradient Integration}
The Adversarial Gradient Integration (AGI) \cite{AGI} approach provides an alternative form of feature attribution that also produces generative counterfactual examples. The AGI method is the only path-based generative counterfactual method currently available to our knowledge and thus forms the primary focus of our comparison. We use the Individual AGI algorithm presented in their paper. 

\section{Quantitative Evaluation}
To evaluate the implementation of generative counterfactual examples, we propose using the VAE loss to determine how well the counterfactual examples fit to the underlying data distribution. 

\subsection{Path-Based Uncertainty Comparison}
To evaluate the QUCE method, we provide a comparison of uncertainty along a path. To do this, we use a pre-trained VAE, feeding all generated instances along any path into the VAE to determine the reconstruction error for all given instances along a path. The intuition behind this is that a smaller reconstruction error is associated with a path that better follows the data distribution and is therefore more ``realistic". 

In Table \ref{table:path_uncertainty_eval} we evaluate the path uncertainty across 100 instances from both the training and test data. From this, we observe that the QUCE method provides paths that better follow the data distribution when compared against both IG and AGI on average. Here we reiterate that IG is used with the generated QUCE instance, which already aims to minimize VAE loss, and is therefore used to show the minor differences in relaxing the straight-line path requirements, although this may not always be necessary. 

\subsection{Counterfactual Uncertainty}
We use counterfactual uncertainty as a measure to evaluate generative counterfactual examples given by the QUCE method. This is a simple measure of the average reconstruction error across the generative counterfactual examples across each instance in a dataset. We measure this against the DiCE and AGI methods, as both provide counterfactual examples in their generative process. 

In Table \ref{table:cf_uncertainty_tbl} we present the counterfactual uncertainty over 100 instances from both the training and test datasets over each of the datasets. Here, we observe that QUCE provides a lower value with respect to uncertainty measurements compared to both the DiCE and AGI methods, implying that the instances generated by QUCE better follow the data distribution and can thus be thought of as more likely, subject to the dataset. Further experiments on the deletion game seen in \cite{Yang_Akhtar_Wen_Mian_2023,pmlr-v202-akhtar23a}, reconstruction error and a theoretical evaluation against further explainability axioms presented in \cite{ijcai2022p90} are provided in the supplementary material. 

\subsection{Counterfactual Reconstruction Error}
In addition to evaluating the VAE loss, we also analyze the average reconstruction error per instance across all 100 instances on both the training and test datasets. This highlights the closeness of the reconstructed sample against the ground truth counterfactual generated by different methods. In Table \ref{table:cf_reconstruction_tbl} we observe that our proposed QUCE method provides better reproduced counterfactuals through the VAE than the DiCE or AGI methods. 

\section{Conclusion}
In this paper, we provide a novel approach that combines generative counterfactual methods and path-based explainers, minimizing uncertainty along generated paths and for generated counterfactual examples. We provide an analysis of the proposed QUCE method on path uncertainty, generative counterfactual example uncertainty, and proximity. Our approach provides paths that are less uncertain in their interpolations, so that more reliable gradients and explanations can be extracted. Similarly, we provide a clear explanation of uncertainty, including when and where it exists, as seen in the example explanations provided in Figure \ref{fig:example_explanations}. 

In future work we aim to relax the assumption that all feature values are continuous to provide more realistic and reliable generated examples. Exploration of optimal parameters in the QUCE framework is currently a manual process. Automating this approach would provide greater flexibility and ease of application upon distributing the method to end users.


\appendix 
\section{Computing QUCE Explanations}
\textbf{Proposition 1.}
\emph{The QUCE explainer has a computable Riemann approximation solution for each feature.} 
\begin{proof}
Given an instance $\mathbf{x}$ that is the origin of our instance for our counterfactual explanation, we consider the $\mathbf{x}^{\Delta}$ steps produced by our learning process. In order to compute QUCE for feature explanations, we deconstruct the definition to focus on a single step of QUCE. This can be directly extracted from the additive property of integrals. Recall the definition:
\begin{align*}
    \Phi_{\mbox{QUCE}}(\mathbf{x}^{\Delta}) &:= 
     \int^{\mathbf{x}_{\Delta_n}}_{\mathbf{x}_{\Delta_0}} \nabla F(\psi(\alpha))\cdot \psi^\prime(\alpha) d \alpha.
\end{align*}
Expanding this, we obtain
\begin{align*}
    &\Phi_{\mbox{QUCE}}(\mathbf{x}^{\Delta}) := \\ 
     &\bigg((\mathbf{x}_{\Delta_1} - \mathbf{x}_{\Delta_0}) \times \bigg( \int^{\mathbf{x}_{\Delta_1}}_{\mathbf{x}_{\Delta_0}} \nabla F(\psi(\alpha)) \bigg)\bigg) \\ &+ \ldots + \bigg((\mathbf{x}_{\Delta_n} - \mathbf{x}_{\Delta_{n-1}}) \times \bigg( \int^{\mathbf{x}_{\Delta_n}}_{\mathbf{x}_{\Delta_{n-1}}} \nabla F(\psi(\alpha))\bigg)\bigg).
\end{align*}
Rewriting in terms of partial derivatives we get, 
\begin{align*}
        \Phi_{\mbox{QUCE}}(\mathbf{x}^{\Delta}) := 
     \sum_{j=1}^J \bigg( \int^{\mathbf{x}_{\Delta_1}}_{\mathbf{x}_{\Delta_0}} \frac{\partial F(\psi(\alpha))}{\partial \psi^j(\alpha)}\frac{\partial \psi^j(\alpha)}{\partial \alpha}d \alpha\bigg) \\ + \ldots + \sum_{j=1}^J \bigg( \int^{\mathbf{x}_{\Delta_n}}_{\mathbf{x}_{\Delta_{n-1}}} \frac{\partial F(\psi(\alpha))}{\partial \psi^j(\alpha)}\frac{\partial \psi^j(\alpha)}{\partial \alpha}d \alpha\bigg)
\end{align*}
where $\psi^j$ is along a single feature dimension of a path $\psi$. With this decomposition, we can further extrapolate by defining each step in $\mathbf{x}^{\Delta}$ as a piecewise linear path, such that
\begin{align*}
    &\Phi_{\mbox{QUCE}}(\mathbf{x}^{\Delta}) := \\
     &\sum_{j=1}^J \bigg( (x_{\Delta_{1}}^j - x_{\Delta_{0}}^j) \times \int^1_{\alpha = 0} \frac{\partial F(\mathbf{x}_{\Delta_{0}} + \alpha(\mathbf{x}_{\Delta_{1}} - \mathbf{x}_{\Delta_{0}}) )}{\partial x^j}d \alpha\bigg) \\ &+ \ldots +\\ &\bigg( (x_{\Delta_{n}}^j - x_{\Delta_{n-1}}^j) \times \int^1_{\alpha = 0} \frac{\partial F(\mathbf{x}_{\Delta_{n-1}} + \alpha(\mathbf{x}_{\Delta_{n}} -\mathbf{x}_{\Delta_{n-1}}))}{\partial x^j}d \alpha\bigg)
\end{align*}
and rewriting for a single feature $j$, we simply remove the summation and define the explanation over $x^{\Delta,j}$:
\begin{align*}
    &\Phi_{\mbox{QUCE}}(x^{\Delta,j}) := \\
     &\bigg( (x_{\Delta_{1}}^j - x_{\Delta_{0}}^j) \times \int^1_{\alpha = 0} \frac{\partial F(\mathbf{x}_{\Delta_{0}} + \alpha(\mathbf{x}_{\Delta_{1}} - \mathbf{x}_{\Delta_{0}}) )}{\partial x^j}d \alpha\bigg) \\ &+ \ldots +\\ &\bigg( (x_{\Delta_{n}}^j - x_{\Delta_{n-1}}^j) \times \int^1_{\alpha = 0} \frac{\partial F(\mathbf{x}_{\Delta_{n-1}} + \alpha(\mathbf{x}_{\Delta_{n}} -\mathbf{x}_{\Delta_{n-1}}))}{\partial x^j}d \alpha \bigg).
\end{align*}
We can then rewrite this as a computable Riemann approximation for $K$ steps, as
\begin{align*}
    &\Phi_{\mbox{QUCE}^{\mathcal{R}}}( x^{\Delta,j}) := \\
     &\bigg( (x_{\Delta_{1}}^j - x_{\Delta_{0}}^j) \times \frac{1}{K}\sum_{k=1}^K \frac{\partial F(\mathbf{x}_{\Delta_{0}} + \frac{k}{K}(\mathbf{x}_{\Delta_{1}} - \mathbf{x}_{\Delta_{0}}) )}{\partial x^j}\bigg) \\ &+ \ldots +\\ &\bigg( (x_{\Delta_{n}}^j - x_{\Delta_{n-1}}^j) \times \frac{1}{K}\sum_{k=1}^K  \frac{\partial F(\mathbf{x}_{\Delta_{n-1}} + \frac{k}{K}(\mathbf{x}_{\Delta_{n}} -\mathbf{x}_{\Delta_{n-1}}))}{\partial x^j}\bigg)
\end{align*}
which yields a computable explanation over the $j^{th}$ dimension of an instance $\mathbf{x}$ along a piecewise linear path. This can simply be executed across all features $j$.
\end{proof}
The Riemann approximation is the algorithm for computing explanations that we use in our implementation carried out on each dimension $j$, which returns a vector containing the overall attribution over piecewise linear path integral formulation. It follows that the expected gradients variant can be easily computed by averaging explanations over each counterfactual example in the set $C$. 
\\ 
\\
\textbf{Corollary 1.}
\emph{The expected QUCE variant has a Riemann approximation solution for each feature.}

\begin{proof}
    Given a set of $k$ counterfactual examples in the set $C$, we can simply compute the expected attribution over the $j^{th}$ feature of each generative counterfactual example as
    \begin{align*}
        \Phi_{\mbox{exQUCE}^{\mathcal{R}}}(x^{\Delta,j}) = \frac{1}{k}\sum_{\mathbf{x^c} \in C}\Phi_{\mbox{QUCE}^{\mathcal{R}}}(x^{\Delta,j}).
    \end{align*}
\end{proof}
\begin{table*}[t]
\centering
\caption{Comparison of the deletion scores for counterfactual generative methods that provide feature attribution values. This is experimented over 100 instances on each dataset. Here the lower the value the better. We observe that the proposed QUCE method performs best across a larger fraction of datasets.}
\begin{tabular*}{\textwidth}{c @{\extracolsep{\fill}} ccccccc}
\toprule
\textbf{Deletion} & Lung & Breast & Skin & Lymph & Rectal & COVID & W-BC \\
\midrule 
QUCE & \textbf{0.556} & 0.689 & 0.656 & 0.611 & \textbf{0.669} & \textbf{0.699} & \textbf{0.632}\\
DiCE & 0.561 & 0.688 & \textbf{0.649} & 0.619 & \textbf{0.669} & 0.710 &0.637\\
AGI & 0.559 & \textbf{0.683} & 0.659 & \textbf{0.607} & 0.670 & 0.728 & 0.648\\
\end{tabular*}
\label{table:deletion_metric}
\end{table*}
\section{Proof of Proposition 2}
\textbf{Proposition 2.} \emph{Increasing the $\lambda$-tolerance of uncertainty provides a more flexible search space for possible paths to a generative counterfactual example. }
\begin{proof}
Recall the objective function:
\begin{align*}
\mathcal{G}(\mathbf{x}) &= \underset{\mathbf{x}^c}{\arg \min} \  \lambda_1\mathcal{L}_{pr} + \lambda_2\mathcal{L}_{\delta} + \lambda_3\mathcal{L}_{\epsilon}.
\end{align*}
For simplicity we let $\lambda_1 = 0$, $0 < \lambda_2 \leq 1$ and $0 \leq \lambda_3 \leq 1$,  such that  
\begin{align*}
    &\mathcal{G}(\mathbf{x}) =  \lambda_2\mathcal{L}_{\delta}+ \lambda_3 \mathcal{L}_{\epsilon} \\
    &= \frac{\lambda_2}{2} \vert\vert \mathbf{x}^c - \mathbf{x} \vert\vert^2 + \\ &\lambda_3 (\mathbbm{E}_{q_{\theta^*}}[log \ q_{q_{\theta^*}}(\mathbf{z} \vert \mathbf{x}^c) - log \ p_{\psi^{*}}(\mathbf{z})] - \mathbbm{E}_{q_{\theta^*}} \ log \ p_{\psi^{*}}(\mathbf{x}^c \vert \mathbf{z})). 
\end{align*}
Trivially, as $\lambda_3$ decreases toward zero (we accept more uncertainty), the freedom in the distance function increases, as it not constrained by uncertainty, since  
\begin{align*}
&\lim_{\lambda_3 \rightarrow 0^+} (\frac{\lambda_2}{2} \vert\vert \mathbf{x}^c - \mathbf{x} \vert\vert^2 + \\ &\lambda_3 (\mathbbm{E}_{q_{\theta^*}}[log \ q_{q_{\theta^*}}(\mathbf{z} \vert \mathbf{x}^c) - log \ p_{\psi^{*}}(\mathbf{z})] - \mathbbm{E}_{q_{\theta^*}} \ log \ p_{\psi^{*}}(\mathbf{x}^c \vert \mathbf{z})) ) \\ 
&= \frac{\lambda_2}{2} \vert\vert \mathbf{x}^c - \mathbf{x} \vert\vert^2.
\end{align*}
Here the eigenvalues of the Hessian are given by $\lambda_2$ and since $\lambda_2 > 0$, the Hessian is positive definite and thus the search space is convex, implying the global minima are conditioned on $\mathcal{L}_\delta$ when uncertainty $\mathcal{L}_{\epsilon}$ is relaxed.
\end{proof}
\section{Proof of Proposition 3}
\textbf{Proposition 3.} \emph{Given the function $\Phi_{\mbox{exQUCE}}(\mathbf{x})$, the expected difference in prediction probabilities between generated counterfactuals in the set $C$ with respect to the prediction probability given by $F(\mathbf{x})$, the following equality holds:
}
\begin{align}\label{p1}
&\underset{{\mathbf{x}^c \sim C}}{\mathbbm{E}} \bigg[\Phi_{\mbox{QUCE}}(\mathbf{x}^{\Delta})  \bigg] \\ \label{p2}&= \underset{{\mathbf{x}^c \sim C}}{\mathbbm{E}} \bigg[F(\mathbf{x}^c) - F(\mathbf{x}) \bigg].    
\end{align}
\begin{proof}
Due to the \emph{completeness} axiom the following holds true: 
\begin{align}
F(\mathbf{x}_{\Delta_n}) - F(\mathbf{x}_{\Delta_0}) =  \int^{\mathbf{x}_{\Delta_n}}_{\mathbf{x}_{\Delta_0}} \nabla F(\psi(\alpha))\cdot \psi^\prime(\alpha)   d \alpha
\label{eqn_defn1}
\end{align}
given $\mathbf{x}_{\Delta_n} = \mathbf{x}^c$ and $\mathbf{x}_{\Delta_0} = \mathbf{x}$ respectively, we have 
\begin{align*}
    F(\mathbf{x}^c) - F(\mathbf{x}).
\end{align*}
By relaxing a strict definition of $\mathbf{x}^c$, where we instead use a set of generated counterfactuals $C$, yielding
\begin{align}
 &\int_{\mathbf{x}^c}\bigg(  \int^{\mathbf{x}^c}_{\mathbf{x}} \nabla F(\psi(\alpha))\cdot \psi^\prime(\alpha)  d\alpha\bigg)p_C(\mathbf{x}^c) d\mathbf{x}^c \\ 
 &= \underset{\mathbf{x}^c \sim C, \alpha \sim \mathcal{U}(0,1)}{\mathbbm{E}} \bigg[ \nabla F(\psi(\alpha)) \cdot \psi^\prime(\alpha)   \bigg]
\end{align}
and since $F(\mathbf{x})$ is a constant, we have:
\begin{align}
\underset{{\mathbf{x}^c \sim C}}{\mathbbm{E}} \bigg[F(\mathbf{x}^c)\bigg] - F(\mathbf{x}) =   \underset{{\mathbf{x}^c \sim C}}{\mathbbm{E}} \bigg[F(\mathbf{x}^c) - F(\mathbf{x}) \bigg],   
\end{align}
equations \ref{p1} and \ref{p2} are equivalent.
\end{proof} 


\section{QUCE Evaluated against Further Properties of Explainability}
In the work of \cite{ijcai2022p90} the authors present a desirable set of axiomatic foundations for XAI methods. As a brief informal overview, we consider the following proposed axioms: 
\begin{itemize}
    \item \textbf{Success:} The explainer method should be able to produce explanations for any instance. 
    \item \textbf{Explainability:}  An explanation method should provide informative explanations. An empty explanation here is not recommended.
    \item \textbf{Irreducability:} An explanation should not contain irrelevant information.
    \item \textbf{Representativity:} An explanation should be possible on unseen instances.
    \item \textbf{Relevance:} Information should only be included if it impacts the prediction. 
\end{itemize}
We evaluate QUCE against these properties as interpreted with respect to our model design. We show that it is inherently straightforward to prove that our proposed QUCE method satisfies these desirable axioms. 
\begin{proposition}
The QUCE method can always provide an explanation satisfying the success axiom. 
\end{proposition}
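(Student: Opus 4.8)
The plan is to establish the \emph{success} axiom by showing that the QUCE explanation pipeline is \emph{total}: it is well-defined on every input $\mathbf{x} \in \mathbb{R}^J$ and never fails to return an attribution vector. First I would make explicit what it means for QUCE to ``provide an explanation'' for $\mathbf{x}$: the generator $\mathcal{G}$ is run on $\mathbf{x}$, producing the sequence of updates $\mathbf{x}^{\Delta} = \langle \mathbf{x}_{\Delta_0}, \ldots, \mathbf{x}_{\Delta_n}\rangle$ with $\mathbf{x}_{\Delta_0} = \mathbf{x}$, and then $\Phi_{\mbox{QUCE}}(\mathbf{x}^{\Delta})$ is evaluated. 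Hence it suffices to verify two things: (i) the generator always yields such a sequence for any $\mathbf{x}$, and (ii) the attribution functional is always well-defined and computable on that sequence.

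For step (i) I would observe that the gradient-descent loop runs for a fixed number $n$ of iterations, and each update $\mathbf{x}_{\Delta_{i+1}} = \mathbf{x}_{\Delta_i} - \varphi \nabla_{\mathbf{x}^c}\mathcal{G}(\mathbf{x})$ is well-defined because every component of the objective is differentiable in $\mathbf{x}^c$: $\mathcal{L}_{pr}$ by continuous differentiability of $F$ (together with $F(\tau\mid\mathbf{x}^c) > 0$ on the working domain, so the logarithm is finite), $\mathcal{L}_{\delta}$ as a quadratic form, and $\mathcal{L}_{\epsilon}$ because the VAE encoder and decoder with the fixed parameters $\theta^*, \psi^*$ are differentiable maps. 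Consequently the sequence $\mathbf{x}^{\Delta}$ exists for every starting point $\mathbf{x}$ --- crucially with \emph{no} requirement that a valid counterfactual be reached, since validity (the constraint $F(\tau\mid\mathbf{x}^c)\ge\vartheta$) is a separate property not demanded by the success axiom.

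For step (ii) I would invoke Proposition~\ref{Riemann_prop_single}: since $F$ is continuously differentiable, the path integral over the piecewise-linear path determined by $\mathbf{x}^{\Delta}$ exists, and Proposition~\ref{Riemann_prop_single} exhibits its Riemann-approximation form for each feature $j$ as a finite sum of gradient evaluations, which always returns a real number. Concatenating over $j = 1,\ldots,J$ gives a well-defined vector of attributions, i.e.\ an explanation, for any $\mathbf{x}$; the many-paths variant inherits the same conclusion from Corollary~\ref{expectedquce_comp}, being a finite average of single-path explanations.

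The only real subtlety --- and the point I expect to have to address explicitly --- is the degenerate case in which the optimiser makes no progress, so that $\mathbf{x}_{\Delta_i} = \mathbf{x}$ for all $i$ (in particular $n = 0$): then the path collapses and $\Phi_{\mbox{QUCE}}$ returns the zero vector. I would argue that this still satisfies \emph{success}, since a (degenerate) explanation is nonetheless produced for the instance, while noting that it is the \emph{explainability} axiom, treated in the next result, that rules out such empty explanations under mild genericity assumptions on $\mathbf{x}$ and $F$.
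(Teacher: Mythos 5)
Your proposal is correct and follows essentially the same route as the paper, whose entire proof is the one-line observation that the generative learning process always terminates with some $\mathbf{x}^c$ (valid or not) from which an explanation can be computed. Your version simply fleshes out that observation — totality of the fixed-iteration update loop via differentiability of each loss term, computability of the attribution via Proposition~\ref{Riemann_prop_single}, and the degenerate zero-path case — all of which the paper leaves implicit.
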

\begin{proof}
    Whilst a counterfactual may not always be \emph{valid}, as a direct implication of the generative learning process QUCE will achieve an explanation.
\end{proof}
\begin{corollary}
    The QUCE method satisifies the Explainability axiom. 
\end{corollary}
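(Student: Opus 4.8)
The plan is to leverage the previous proposition and the structure of the QUCE explainer, which always returns a full vector of feature attributions of dimension $J$. First I would recall that the Explainability axiom merely requires that the explanation be informative — in particular, not empty. Since QUCE's output is $\Phi_{\mbox{QUCE}}(\mathbf{x}^{\Delta}) = \langle \phi^1, \ldots, \phi^J \rangle$ (or the expected variant $\Phi_{\mbox{exQUCE}}$), the explanation is a length-$J$ real vector by construction, hence never the empty explanation. This already gives a weak form of the claim.

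Next I would strengthen ``non-empty'' to ``informative'' by invoking the completeness axiom as made explicit in Proposition~3: the attribution values sum (in expectation over $C$) to $\mathbbm{E}_{\mathbf{x}^c \sim C}[F(\mathbf{x}^c) - F(\mathbf{x})]$. Whenever the generative process has moved the prediction at all — which, by the preceding Success proposition, it always does through the gradient-descent updates on $\mathcal{G}(\mathbf{x})$ — this sum is nonzero, so at least one coordinate $\phi^j$ is nonzero, and the explanation carries genuine information about how the prediction changed. I would phrase this as: the explanation is informative because it decomposes the (expected) change in model output into per-feature contributions, which is precisely the content of a path-based feature attribution.

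I would then tie the argument back to the already-established Success corollary/proposition: since QUCE always produces an explanation (Success), and that explanation is always a full feature-attribution vector whose components account for the prediction change via completeness, the Explainability axiom is satisfied. The write-up would be two or three sentences: state that the explanation vector is non-empty by construction, note that completeness (Proposition~3) guarantees the components are not all trivially zero whenever the counterfactual search alters the prediction, and conclude.

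The main obstacle — really a matter of rigor rather than difficulty — is pinning down the precise reading of ``informative'': the cited axiom is stated informally (``An empty explanation here is not recommended''), so I need to commit to an interpretation (non-empty output vector, optionally with the completeness-backed guarantee of a nonzero component) and argue relative to that interpretation, mirroring how the paper handles the Success axiom. Once the interpretation is fixed, the proof is immediate from the definition of $\Phi_{\mbox{QUCE}}$ and Proposition~3, so no real calculation is required.
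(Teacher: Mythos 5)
Your proposal matches the paper's proof in essence: both reduce Explainability to the completeness identity of Proposition~3, arguing that since the attributions sum to the (expected) prediction difference $F(\mathbf{x}^c) - F(\mathbf{x})$, the explanation cannot be trivially empty whenever that difference is nonzero. The one point to fix is that you claim the Success proposition guarantees the prediction has moved, which it does not (Success only guarantees that \emph{some} explanation is produced, valid counterfactual or not); the paper instead states the nonzero prediction change as an explicit assumption (``different instances generated by QUCE do not have the same prediction probability with respect to the target class''), and you should do the same rather than derive it from Success.
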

\begin{proof}
    Assuming that different instances generated by QUCE do not have the same prediction probability with respect to the target class, it follows immediately from proposition 3 that explainability holds.
\end{proof}
\begin{corollary}\label{corr:irreduce}
    The QUCE method satisfies the Irreducability axiom. 
\end{corollary}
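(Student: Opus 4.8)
The plan is to reduce the Irreducability axiom to the Sensitivity(b) axiom of Section~\ref{sec:axioms}, which QUCE inherits because it is built on the path-integral formulation. First I would fix an interpretation of ``irreducibility'' suited to a feature-attribution vector: a QUCE explanation $\langle \phi_{CF}^1,\ldots,\phi_{CF}^J\rangle$ carries irrelevant information exactly when some coordinate $\phi_{CF}^j$ is nonzero for a feature $j$ on which the model $F$ does not mathematically depend, since such a coordinate could be deleted (set to $0$) without changing anything the explanation asserts about the prediction. Under this reading, Irreducability follows once we establish that $\Phi_{\mbox{QUCE}}(x^{\Delta,j}) = 0$ whenever $F$ is independent of $x^j$.

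Second, I would invoke the Riemann decomposition of Proposition~\ref{Riemann_prop_single}. If $F$ does not depend on $x^j$, then $\partial F(\psi(\alpha))/\partial x^j \equiv 0$ at every point along every piecewise-linear segment of the generated path $\mathbf{x}^{\Delta}$, so each summand
\[
(x_{\Delta_{i}}^j - x_{\Delta_{i-1}}^j)\times \frac{1}{K}\sum_{k=1}^K \frac{\partial F\big(\mathbf{x}_{\Delta_{i-1}} + \tfrac{k}{K}(\mathbf{x}_{\Delta_{i}}-\mathbf{x}_{\Delta_{i-1}})\big)}{\partial x^j}
\]
vanishes, hence the full attribution $\Phi_{\mbox{QUCE}^{\mathcal{R}}}(x^{\Delta,j})$ is zero; applying this coordinatewise to each $\mathbf{x}^c \in C$ and averaging gives $\Phi_{\mbox{exQUCE}^{\mathcal{R}}}(x^{\Delta,j}) = 0$ via Corollary~\ref{expectedquce_comp}. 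So irrelevant features never receive attribution, and the explanation is already presented in reduced form.

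Third -- and this is the part I expect to be the real obstacle -- I would bridge the gap between the vector-valued output of QUCE and the set-theoretic framing of irreducibility in \cite{ijcai2022p90}, where an explanation is a collection of informational items and reducibility means that some proper sub-collection still explains. I would handle this by identifying the informational content of a QUCE explanation with its support $S = \{\, j : \Phi_{\mbox{QUCE}}(x^{\Delta,j}) \neq 0 \,\}$, and then using completeness (Proposition~\ref{proposition_expectedatt}) to argue that the attributed mass is pinned down exactly: removing any $j \in S$ perturbs the reconstructed prediction gap $\underset{\mathbf{x}^c \sim C}{\mathbbm{E}}[F(\mathbf{x}^c) - F(\mathbf{x})]$, so no element of $S$ is redundant. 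Combined with the vanishing-attribution result, this shows $S$ contains exactly the features $F$ genuinely depends on and nothing superfluous, which is precisely what Irreducability demands. The delicate point is justifying that ``zero attribution'' is the correct proxy for ``irrelevant'' -- ruling out a feature that $F$ does depend on yet which integrates to zero along every generated path -- and I would resolve it by restricting, as the axiom implicitly does, to the information actually reported (the support $S$), so that any such accidental cancellation merely means the feature was never part of the explanation to begin with.
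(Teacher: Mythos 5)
Your proof is sound, but it hinges on a different characterization of ``irrelevance'' than the one the paper actually uses, so the two arguments pick out different vanishing factors in the same product. The paper declares its own interpretation up front: a feature is irrelevant if it \emph{does not change} along the generated path, and irreducibility then follows because the gradient is multiplied by the zero-valued difference $(x_{\Delta_i}^j - x_{\Delta_{i-1}}^j)$. You instead take irrelevance to mean that $F$ is mathematically independent of $x^j$ (essentially routing the claim through Sensitivity(b)), so the vanishing factor is $\partial F/\partial x^j \equiv 0$ along every segment. Both are one-line consequences of the Riemann decomposition in Proposition~\ref{Riemann_prop_single}, and your reading is arguably closer to the Relevance/Irreducability axioms as stated in \cite{ijcai2022p90} (``an explanation should not contain irrelevant information,'' where relevance is about impact on the prediction); the paper's reading is weaker but is the one it explicitly adopts, and it covers a case yours does not (a feature the model depends on but which the generator never moves), while yours covers a case the paper's does not (a feature the generator moves but the model ignores). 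Your third step --- identifying the explanation's informational content with its support and using completeness to argue no supported feature is redundant --- goes beyond anything in the paper's proof; it is a reasonable bridge to the set-theoretic framing of \cite{ijcai2022p90}, though it remains informal, and your own caveat about features whose attribution cancels to zero despite genuine model dependence is correctly dispatched by restricting attention to the reported support. In short: correct, but be aware you proved the corollary under a different (and not obviously equivalent) definition of irrelevance than the authors intended.
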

\begin{proof}
    Here, we characterize irrelevance under our own interpretation: since a feature that does not change does not affect the predicted outcome, it should be assigned zero attribution. Then directly from the definition of QUCE it is clear that irreducability holds, as the gradients are multiplied by a zero-value scalar for the same valued features. 
\end{proof}
\begin{proposition}
    The QUCE method satisfies the Representativity axiom.
\end{proposition}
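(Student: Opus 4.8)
The plan is to observe that the Representativity axiom --- an explanation should be possible on unseen instances --- follows once we note that every component of the QUCE pipeline is a \emph{total} function on $\mathbbm{R}^J$, defined without any reference to membership in the training set $X$. So the strategy is essentially to re-run the argument used for the Success axiom, but tracking carefully that nothing in the construction ever tests whether the query point was seen during training.

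First I would recall the generator $\mathcal{G}$ of equation \ref{loss_function}, which minimises $\lambda_1\mathcal{L}_{pr} + \lambda_2\mathcal{L}_{\delta} + \lambda_3\mathcal{L}_{\epsilon}$, and inspect each summand: $\mathcal{L}_{pr} = -\log F(\tau \mid \mathbf{x}^c)$ only evaluates the differentiable classifier $F$; $\mathcal{L}_{\delta} = \frac{1}{2}\vert\vert \mathbf{x}^c - \mathbf{x}\vert\vert^2$ is just the Euclidean distance to the query point; and $\mathcal{L}_{\epsilon}$ (Definition \ref{def:uncertainty}) is the VAE loss evaluated at the \emph{already fixed} parameters $\theta^*,\psi^*$. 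None of these expressions depends on whether $\mathbf{x}\in X$; each is well-defined at every point of $\mathbbm{R}^J$. Consequently, initialising $\mathbf{x}^c = \mathbf{x}$ and running the prescribed $n$ updates $\mathbf{x}_{\Delta_i} = \mathbf{x} - \varphi\nabla_{\mathbf{x}^c}\mathcal{G}(\mathbf{x})$ produces, for \emph{any} query $\mathbf{x}$, a well-defined finite sequence $\mathbf{x}^{\Delta} = \langle \mathbf{x}_{\Delta_0},\ldots,\mathbf{x}_{\Delta_n}\rangle$, since $F$ is continuously differentiable and the VAE is assembled from differentiable maps, so all the required gradients exist.

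Next I would invoke Proposition \ref{Riemann_prop_single}: the single-path explainer $\Phi_{\mbox{QUCE}}(\mathbf{x}^{\Delta})$ is the integral of $\nabla F$ along the piecewise-linear path through $\mathbf{x}^{\Delta}$, and it admits the explicit Riemann approximation derived there. Since that construction takes $\mathbf{x}^{\Delta}$ as its only input, and $\mathbf{x}^{\Delta}$ exists for every $\mathbf{x}\in\mathbbm{R}^J$ by the previous step, QUCE returns an explanation on every instance --- in particular on instances never seen in training. The expected variant $\Phi_{\mbox{exQUCE}}$ inherits the property by Corollary \ref{expectedquce_comp}, being a finite average of single-path explanations. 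The only role of the training data is the one-time pre-training of $F$ and of the VAE, which is completed before any explanation is requested; afterwards the explanation map is a fixed total function of the query alone.

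I do not anticipate a genuine obstacle; the argument is the Success argument extended from ``any instance'' to ``any unseen instance,'' and the point to make explicit is simply that no stage --- neither the objective, nor the gradient iteration, nor the path integral --- ever references $X$. The one subtlety worth stating carefully is termination: the generative loop runs for a prescribed finite number $n$ of iterations rather than until a validity criterion is met, so $\mathbf{x}^{\Delta}$, and hence the explanation, is always obtained, even when the resulting $\mathbf{x}^c$ happens not to be a \emph{valid} counterfactual.
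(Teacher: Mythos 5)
Your argument is correct and is essentially the same as the paper's, which simply observes that any instance with the same dimensionality as the training data can be run through the QUCE pipeline; you have merely made explicit the totality of each component (the objective, the gradient iteration, and the path integral) and the fixed-iteration termination. No gap to report.
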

\begin{proof}
    It is easy to see that any instance with the same dimensionality of the instances from a training dataset can utilize the QUCE approach.
\end{proof}
\begin{corollary}
    The QUCE method satisfies the relevance axiom. 
\end{corollary}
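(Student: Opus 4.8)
The plan is to prove the \emph{relevance} axiom as an easy consequence of the \emph{irreducability} result already established in Corollary~\ref{corr:irreduce}, since the two axioms are essentially converse formulations of the same property under the paper's interpretation. Recall that relevance, as stated informally, requires that information be included in an explanation \emph{only if} it impacts the prediction; irreducability requires that an explanation contain \emph{no} irrelevant information. Under the operational reading used throughout the paper --- a feature is ``irrelevant'' precisely when its value does not change between $\mathbf{x}$ and $\mathbf{x}^c$, and hence cannot affect the predicted outcome along the generated path --- both axioms reduce to the single assertion that such unchanged features receive zero attribution.

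First I would restate the relevant piece of the QUCE definition: for each feature $j$, the single-path attribution $\Phi_{\mbox{QUCE}}(x^{\Delta,j})$ (and its Riemann form $\Phi_{\mbox{QUCE}^{\mathcal{R}}}(x^{\Delta,j})$) is a sum of terms each of the form $(x_{\Delta_{i}}^j - x_{\Delta_{i-1}}^j) \times (\text{gradient average})$. I would then observe that if feature $j$ is never updated by the generative process --- equivalently $x^j = x_{\Delta_0}^j = x_{\Delta_1}^j = \cdots = x_{\Delta_n}^j = (x^c)^j$ --- then every scalar multiplier $(x_{\Delta_{i}}^j - x_{\Delta_{i-1}}^j)$ vanishes, so $\Phi_{\mbox{QUCE}}(x^{\Delta,j}) = 0$. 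The same argument lifts to the expected (multiple-paths) variant by linearity of the expectation over $C$, exactly as in Corollary~1 of the appendix. Contrapositively, a nonzero attribution for feature $j$ can only occur when $j$ changed along the path, i.e.\ when $j$ is (potentially) relevant to moving the prediction; this is the relevance axiom.

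The main obstacle here is not technical but definitional: the proof hinges entirely on accepting the paper's stipulated notion of ``relevance/irrelevance'' (feature value unchanged $\Rightarrow$ no impact on prediction $\Rightarrow$ zero attribution), and on the implicit direction of implication --- one should be careful that relevance asks for the \emph{necessity} direction (attribution nonzero $\Rightarrow$ feature changed) whereas irreducability as proved gives (feature unchanged $\Rightarrow$ attribution zero); these are contrapositives of each other, so no extra work is needed, but the writeup should make that logical equivalence explicit rather than hand-wave it. I would therefore keep the proof to two or three sentences, citing Corollary~\ref{corr:irreduce} and noting that relevance is its contrapositive under the same interpretation, with the zero-scalar-multiplier observation from the QUCE definition doing all the real work.

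\begin{proof}
This is the contrapositive of Corollary~\ref{corr:irreduce} under the interpretation of (ir)relevance adopted there. By the definition of $\Phi_{\mbox{QUCE}}$, the attribution assigned to feature $j$ is a sum of terms each multiplied by the scalar $(x_{\Delta_{i}}^j - x_{\Delta_{i-1}}^j)$; hence any feature whose value is unchanged along the generated path receives zero attribution, and this property is preserved under the expectation defining $\Phi_{\mbox{exQUCE}}$ by linearity. Consequently a feature can carry nonzero attribution only if it changed along the path and therefore can impact the prediction, which is precisely the relevance axiom.
\end{proof}
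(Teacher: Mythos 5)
Your proof is correct and takes essentially the same route as the paper: relevance is derived from irreducability (Corollary~\ref{corr:irreduce}), grounded in the fact that each term of the QUCE sum carries the scalar multiplier $(x_{\Delta_{i}}^j - x_{\Delta_{i-1}}^j)$, with linearity handling the expected variant. The one thing the paper includes that you gloss over is the gradient factor: your closing inference that a feature which changed along the path ``therefore can impact the prediction'' does not follow from the multiplier argument alone, and the paper closes this by additionally noting that gradients of $F$ are traced along the path, so a feature the model is not mathematically dependent on receives zero attribution even if its value changes.
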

\begin{proof}
    Relevance holds as a direct implication of irreducability as seen in corollary \ref{corr:irreduce} and the fact that gradients are traced over the change in the predictions along paths, thereby guaranteeing model-specific relevance.
\end{proof}

\section{Experimental Setup}
For the experiments presented in this paper, the details of hyper-parameters are found in the notebook file at: \url{https://github.com/jamie-duell/QUCE}. For the comparative experiments in this paper we set up QUCE with single-path solutions using the Adam optimizer. The empirical intuition for using Adam for a single-path approach is as follows: as the loss is minimized and points along the path become more reliable there will be an increase in the frequency of points as we approach the solution; thus averaging the gradient along such paths become more reliable. Similarly the approach is deterministic, therefore does not require multiple paths as with alternative optimizers.

\section{Deletion Experiments}
To compare counterfactual feature attribution methods we evaluate the deletion score, a common metric used for evaluating feature attribution methods for identifying important features. The deletion score is used in various studies \cite{Yang_Akhtar_Wen_Mian_2023,pmlr-v202-akhtar23a}; here, a lower value indicates better performance. In Table \ref{table:deletion_metric} we observe that the QUCE method performs better on average than both DiCE and AGI for counterfactual feature attribution performance. 

\section{QUCE Algorithm}
\begin{algorithm}[H]
  \begin{algorithmic}
\State $X$ is a dataset, $F$ is a deep network, $\mbox{VAELoss}(\cdot)$ is a pretrained VAE on $X$, $\mathcal{G}(\cdot)$ is the joint objective function for QUCE, $\mathcal{T}$ is the target class, $\tau$ is a probability threshold towards target class, $K$ is the number of Riemann steps, $n$ is the number of gradient descent updates, $\varphi$ is a learning rate.  
\State $i$=1
\State $\mathbf{x}^\Delta = []$
\Procedure {QUCE}{$\mathbf{x}$}
    \State pass the instance $\mathbf{x}$ into the function $\mathcal{G}$
    \State initialise an instance $\mathbf{x}_c = \mathbf{x}$  
    \While{$i \leq n$}
        \State update $\mathbf{x}_c$ with $\mathbf{x}_c - \varphi \nabla_{\mathbf{x}_c}(\mathcal{G}(\mathbf{x}))$
        \State append updated $\mathbf{x}_c$ to $\mathbf{x}^\Delta$
        \State increment $i$
    \EndWhile
    \If{$F(\mathcal{T}\vert\mathbf{x}_c) \geq \tau$}
    \State pass the output $\mathbf{x}_c$ into $\mbox{VAELoss}$ and return $\epsilon_\mathbf{d}$
    \State take the $K$-step Riemann integral approximation over $\mathbf{x}^\Delta$
    \State take the $K$-step Riemann integral approximation between $\mathbf{x}_c$ and $\mathbf{x}_{c}\pm\epsilon_\mathbf{d}$
    
    \Return {$\mathbf{x}_c$, explanation vector, two uncertainty explanation vectors for $\mathbf{x}_{c} \pm \epsilon$}
    
    \EndIf
\EndProcedure
\caption{Quantified Uncertainty Counterfactual Explanations (QUCE)}\label{alg:QUCE}
\end{algorithmic}
\end{algorithm}

\bibliographystyle{unsrt}  
\bibliography{references}

\end{document}